\theoremstyle{plain}
\newtheorem{thm}{Theorem}[section]
\newtheorem{prop}[thm]{Proposition}
\theoremstyle{definition}
\newtheorem{defn}{Definition}[section]
\theoremstyle{remark}
\newtheorem{rem}{Remark}
\DeclareMathOperator*{\argmax}{arg\,max}
\DeclareMathOperator*{\argmin}{arg\,min}
\newcommand{\R}{\mathbb{R}}
\newcommand{\E}{\mathbf{E}}
\newcommand{\vertiii}[1]{{\left\vert\kern-0.25ex\left\vert\kern-0.25ex\left\vert #1 
\right\vert\kern-0.25ex\right\vert\kern-0.25ex\right\vert}}
\newcommand{\cC}{\mathcal{C}}
\newcommand{\cE}{\mathcal{E}}
\newcommand{\cN}{\mathcal{N}}
\newcommand{\cP}{\mathcal{P}}
\newcommand{\cX}{\mathcal{X}}
\newcommand{\cY}{\mathcal{Y}}
\newcommand{\ud}{\mathrm{d}}
\newmdtheoremenv{propbox}{Proposition}
\newmdtheoremenv{lembox}{Lemma}
\newmdtheoremenv{corbox}{Corollary}
\newmdtheoremenv{defnbox}{Definition}
\title{Learning with Differentiable Perturbed Optimizers}
\author{%
  Quentin Berthet\thanks{
  Google Research, Brain team, Paris} \thanks{
  \texttt{\{qberthet, mblondel, oliviert, cuturi, jpvert\}@google.com}
  }\and 
  Mathieu Blondel\footnotemark[1] \footnotemark[2]
  \and
  Olivier Teboul\footnotemark[1] \footnotemark[2]
  \and
  Marco Cuturi\footnotemark[1] \footnotemark[2]
  \and
  Jean-Philippe Vert\footnotemark[1] \footnotemark[2]
  \and
  Francis Bach\thanks{
  INRIA - DI, ENS,
PSL Research University Paris,
  \texttt{francis.bach@inria.fr} 
  }
}
\begin{document}
\maketitle

\begin{abstract}
Machine learning pipelines often rely on optimization procedures to make discrete decisions (e.g., sorting, picking closest neighbors, or shortest paths). Although these discrete decisions are easily computed, they break the back-propagation of computational graphs. In order to expand the scope of learning problems that can be solved in an end-to-end fashion, we propose a systematic method to transform optimizers into operations that are differentiable and never locally constant. Our approach relies on stochastically perturbed optimizers, and can be used readily together with existing solvers. Their derivatives can be evaluated efficiently, and smoothness tuned via the chosen noise amplitude. We also show how this framework can be connected to a family of losses developed in structured prediction, and give theoretical guarantees for their use in learning tasks. We demonstrate experimentally the performance of our approach on various tasks.
\end{abstract}

\section{Introduction}
Many applications of machine learning benefit from the possibility to train by gradient descent compositional models using end-to-end differentiability.  Yet, there remain fields where discrete decisions are required at intermediate steps of a data processing pipeline (e.g., in robotics, graphics or biology). This is the result of many factors: discrete decisions provide a much sought-for interpretability, and discrete solvers are built upon decades of advances in combinatorial algorithms~\citep{schrijver2003combinatorial} for quick decisions (e.g., sorting, picking closest neighbors, exploring options with beam-search, or with shortest paths problems). These discrete decisions can easily be computed in a forward pass. Their derivatives with respect to inputs are however degenerate: small changes in the inputs either yield no change or discontinuous changes in the outputs. Discrete solvers thus break the back-propagation of computational graphs, and cannot be incorporated in end-to-end learning.

In order to expand the set of operations that can be incorporated in differentiable models, we propose and investigate a new, systematic method to transform discrete optimizers into differentiable operations. 
Our approach builds upon the method of stochastic perturbations, the theory of which was developed and applied to several tasks of machine learning recently; see \cite{hazan2016perturbations}. In a nutshell, we perturb the inputs of a discrete solver with  random noise, and consider the perturbed solutions of the problem. The method is both easy to analyze theoretically and simple to implement. We show that the formal expectations of these perturbed solutions are  never locally constant and everywhere differentiable, with successive derivatives being expectations of simple expressions.

{\bf Related work.} Our work is part of growing efforts to modify operations to make them differentiable. Several works have studied the introduction of regularization in the optimization problem to make the argmax differentiable. These works are usually problem-specific, since a new optimization problem needs to be solved. Examples include assignments~\citep{adams2011ranking}, optimal transport \citep{bonneel2016wasserstein, cuturi2013sinkhorn}, differentiable dynamic programming \citep{mensch_2018},
differentiable submodular optimization \citep{djolonga_2017}. A generic approach is SparseMAP \citep{niculae_2018}, based on Frank-Wolfe or active-set algorithms for solving, and on implicit differentiation for Jacobian computation.
Like our proposal, SparseMAP only requires access to a linear maximization oracle. However, it is sequential in nature, while our approach is trivial
to parallelize. In \citep{agrawal2019differentiable}, implicit differentiation on solutions of convex optimization is analyzed. They express the derivatives of the argmax exactly, leading to zero Jacobian almost everywhere when optimizing over polytopes. \citet{vlastelica2019differentiation} proposed to interpolate in a piecewise-linear manner between locally constant regions. The aim is to keep the same value for the Jacobian of the argmax for a large region of inputs, allowing for zero Jacobians as well.

An example of expectation of a perturbed argmax, commonly known as the ``Gumbel trick'',
dates back to \citet{gumbel_1954}, and random choice models \citep{luce1959individual,mcfadden1973conditional,guadagni1983logit}. It is exploited in online learning and bandits to promote exploration, and induce robustness to adversaries (see, e.g., \citep{abernethy2016perturbation} for a survey). It is used for action spaces that are combinatorial in nature \citep{neu2016importance}, and used together with a softmax to obtain differentiable sampling \citep{jang_2016,maddison_2016}, and with distributions from extreme value theory \citep{balog_2017}. 

The use of perturbation techniques as an alternative to MCMC techniques for sampling was pioneered by \citet{papandreou_2011}. They are used to compute expected statistics arising in gradients of conditional random fields. They show exactness for the fully perturbed (but intractable case) and propose ``low-rank'' perturbations as an approximation. These results are extended in
\citep{hazan_2012}, proving that the expected maximum with low-rank perturbations provides an upper-bound on the log partition, and replacing the log partition in conditional random fields loss by that expectation. Their results, however, are limited to discrete product spaces.  New lower bounds on the partition function are derived in \citep{hazan_2013}, as well as a new unbiased sequential sampler for the Gibbs distribution based on low-rank perturbations. These results were further refined in \citep{gane_2014} and \citep{orabona_2014}, and these bounds further studied in 
\citep{shpakova_2016}, who proposed a doubly stochastic scheme. Apart from \citep{lorberbom_2019}, who use a finite difference method, we are not aware of any prior work using perturbation techniques
to differentiate through an \textbf{argmax}. As reviewed above, all papers focus on (approximately) sampling from the Gibbs distribution, upper-bounding the log partition function, or differentiating through the \textbf{max}.
\vspace{0.2cm}

{\bf Contributions.} We make the following contributions:
\vspace{0.2cm}

- We propose a new general method transforming discrete optimizers, inspired by the stochastic perturbation literature. This versatile method applies to any blackbox solver without ad-hoc modifications.
\vspace{0.2cm}

- Our stochastic smoothing allows \textbf{argmax} differentiation, through the formal perturbed maximizer. Its Jacobian is well-defined and non-zero everywhere, thereby avoiding vanishing gradients. 
\vspace{0.2cm}

- The successive derivatives of the perturbed maximum and argmax are expressed as simple expectations, which are easy to approximate with Monte-Carlo methods.
\vspace{0.2cm}

- Our method yields natural connections to the recently-proposed Fenchel-Young losses by \citet{fy_losses_journal}. We show that the equivalence via duality with regularized optimization makes these losses natural. 
\vspace{0.2cm}

- We propose a doubly stochastic scheme for their minimization in learning tasks, and we demonstrate our method on structured prediction tasks, in particular ranking (permutation prediction), for which conditional random fields and the Gibbs distribution are intractable.

\section{Perturbed maximizers}

\begin{wrapfigure}[14]{r}[-12pt]{0.45\textwidth}
\vspace*{-.75cm}
\begin{center}
\includegraphics[width = 0.41\textwidth]{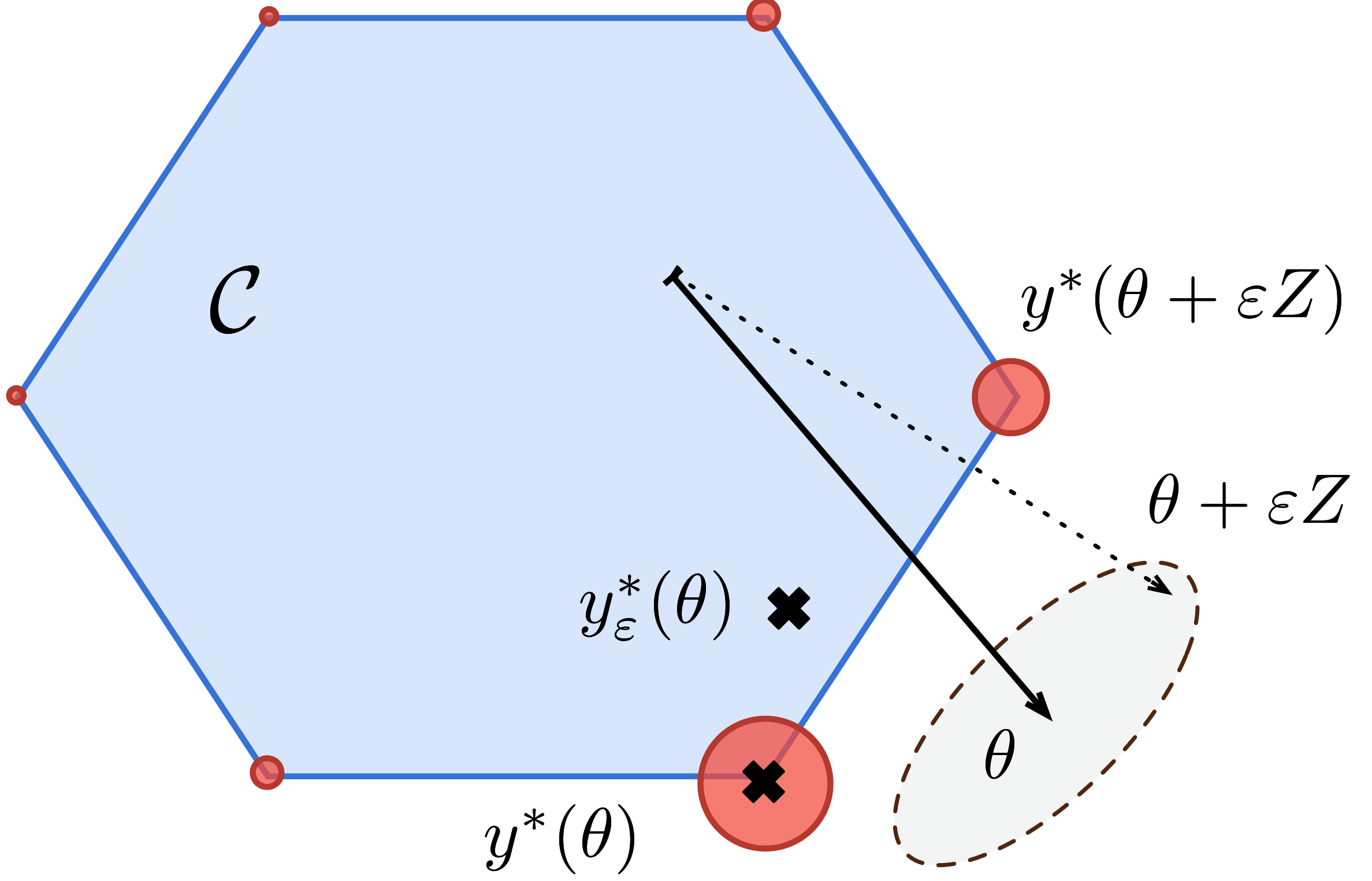}
\end{center}
\caption{\label{fig:perturb} Stochastic smoothing yields a perturbed optimizer $y^*_\varepsilon$ in expectation.
}
\end{wrapfigure}

Given a finite set of distinct points $\cY \subset \R^d$ and~$\cC$ its convex hull, we consider a general discrete optimization problem parameterized by an input $\theta\in\R^d$ as follows:
\begin{equation}
\label{EQ:optDISC}
\!\!F(\theta) = \max_{y \in \cC}\  \langle y , \theta \rangle\, , \  y^*(\theta) = \argmax_{y \in \cC}\  \langle y , \theta \rangle\ .
\end{equation}
As we discuss below, this formulation encompasses a variety of discrete operations commonly used in machine learning. In all cases, $\cC$ is a convex polytope and these problems are linear programs (LP). For almost every $\theta$, the argmax is unique, and $y^*(\theta) = \nabla_\theta F(\theta)$. While widespread, these functions do not have the convenient properties of blocks in end-to-end learning architectures, such as smoothness or differentiability. In particular, $\theta \mapsto y^*(\theta)$ is piecewise constant: its gradient is zero almost everywhere, and undefined otherwise. To address these issues, we simply add to $\theta$ a random noise vector $\varepsilon Z$, where $\varepsilon>0$ is a temperature parameter and $Z$ has a positive and differentiable density $\ud\mu(z) \propto \exp(-\nu(z)) \ud z$ on $\R^d$, so that $y^*(\theta + \varepsilon Z)$ is almost surely (a.s.) uniquely defined. This induces a probability distribution $p_\theta$ for $Y \in \cY$ given by $p_\theta(y) = P(y^*(\theta + \varepsilon Z)=y)$; see Figure~\ref{fig:perturb}. 

This creates a general and natural model on the variable $Y$, when observations are solutions of optimization problems, with uncertain costs. It enables the modeling of phenomena where agents chose an optimal $y \in \cC$ based on uncertain knowledge of $\theta$. We view this as a generalization, or alternative to the Gibbs distribution, rather than an approximation thereof.

Taking expectations with respect to the random perturbation leads to smoothed versions of $F$ and $y^*$:
\begin{defn}\label{def:perturb}
For all $\theta \in \R^d$, and $\varepsilon>0$, we define the {\bf perturbed maximum} as
\[ \textstyle
F_\varepsilon(\theta) = \E[F(\theta+\varepsilon Z)] = \E[\max_{y \in \cC} \ \langle y , \theta +\varepsilon Z\rangle],
\]
and, the {\bf perturbed maximizer} as 
\[
y^*_\varepsilon(\theta) = \E_{p_\theta(y)}[Y] = \E[\argmax_{y \in \cC}\langle y , \theta +\varepsilon  Z\rangle] = \E[\nabla_\theta \max_{y \in \cC}\langle y , \theta +\varepsilon  Z\rangle] = \nabla_\theta F_\varepsilon (\theta)\, .
\]
\end{defn}
Models of random optimizers for linear problems with perturbed inputs are the subject of a wide litterature in machine learning, under the name of ``perturb-and-MAP'' \cite{papandreou_2011,hazan_2012}, and perturbed leader method in online learning \citep{hannan57,kalai05,AbeLeeSin14}. We refer to it here as the {\em perturbed model}. 

\paragraph{Broad applicability.}

Many operations used in machine learning can be written in the form of Eq.~\eqref{EQ:optDISC} and are thus part of our framework. 
Indeed, for any score function $s:\cY \to \R$, the problem $\max_{y \in \cY} s(y)$, can at least be written as a linear program (LP) in Eq.~\eqref{EQ:optDISC}, for some embedding of the set $\cY$. We emphasize that the LP structure need not be known to use the perturbed maximizers. In our experiments, we focus on the following three tasks (see Appendix~\ref{APP:examples} for more examples).
\vspace{0.2cm}

\noindent
{\em Maximum.} The max function from $\R^d$ to $\R$, that returns the largest among the $d$ entries of a vector $\theta$ is commonly used for $d$-way multiclass classification. It is equal to $F(\theta)$ over the unit simplex $\cC = \{y \in \R^d\, : \, y \ge 0\, , \;\mathbf{1}^\top y = 1\}$. 
The computational cost is $O(d)$.
On this set, using Gumbel noise yields the Gibbs distribution for $p_\theta$ (see below).
\vspace{0.2cm}

\noindent
{\em Ranking.} The function returning the ranks (in descending order) of a vector $\theta \in \R^d$ can be written as the argmax of a linear program over the {\em permutahedron}, the convex hull of permutations of any vector $v$ with distinct entries $\cC = \cP_v = \text{cvx}\{P_\sigma v \, : \, \sigma \in \Sigma_d\}$. The computational cost is $O(d \log d)$, using a sort.
\vspace{0.2cm}

\noindent
{\em Shortest paths.}
For a graph $G=(V,E)$ and positive costs over edges $c \in \R^E$, the problem of finding a shortest path (i.e., with minimal total cost) from vertices $s$ to $t$ can be written in our setting with $\theta = -c$ and $\cC = \{y \in \R^E : y \ge 0\,, (\mathbf{1}_{\to i} - \mathbf{1}_{i \to})^\top y = \delta_{i=s} - \delta_{i=t} \}$. The computational cost is $O(|E|)$, using dynamic programming.


\paragraph{A generalization of Gumbel-max.}

An example of this setting is well-known: when $\cY$ is the set of one-hot-encoding of $d$ classes, $\cC$ is the unit simplex, and $Z$ has the Gumbel distribution \citep{gumbel_1954}. In that case it is well-known that $p_\theta$ is the Gibbs distribution, proportional to $\exp(\langle y , \theta \rangle/\varepsilon)$, $F_\varepsilon(\theta)$ is the log-sum-exp function of $\theta$, and $y^*_\varepsilon(\theta)$ is the vector of softmax (or exponential weights) of the components of~$\theta$. Our model is therefore a \emph{generalization} of the Gumbel-max setting.
As $F_\varepsilon$ generalizes the log-sum-exp function for Gumbel noise on the simplex, its dual $\Omega$ is a \emph{generalization} of the negative Shannon entropy (which is the Fenchel dual of the log-sum-exp function).
We show this connection, and that the perturbed maximizer can also be defined as the solution of a convex problem, by Fenchel-Rockafellar duality in Proposition~\ref{PRO:duality} below. The following table summarizes those parallels. Our framework generalizes these ideas, and proposes to exploit the ease of simulation of $p_\theta$ (rather than the explicit forms of Gibbs distributions) for applications in machine learning tasks.
\begin{center}
\begin{tabular}{ |p{3.8cm} |p{3.8cm}|p{4.2cm}| }
\hline
  & Gumbel-max & General perturbed optimizer \\[.1cm] 
  \hline 
  &&\\[-.2cm] 
 noise distribution & $Z_i$ independent Gumbel & $Z\sim \mu$, general random \\[.2cm] 
 domain $\cC$& unit simplex $\Delta^n$  & general polytope: $\text{cvx}(\cY)$ \\[.2cm]
 argmax distribution &$p_{{\sf Gibbs}, \theta} \propto \exp(\langle y , \theta \rangle/\varepsilon)$  & $p_\theta$, no closed form\\[.2cm] 
expectation of maximum & log-sum-exp of $\theta$  & general $F_\varepsilon(\theta)$ \\[.2cm] 
convex regularizer & Shannon negentropy  & general $\Omega = (F_\varepsilon)^\star$\\
\hline
\end{tabular}
\end{center}

\vspace{.1cm}
\begin{prop}
\label{PRO:duality}
Let $\Omega$ be the Fenchel dual of $F_1$, with domain $\cC$. We have that
\begin{equation}
\label{EQ:regul}
y^*_\varepsilon(\theta) = \argmax_{y \in \cC}\big\{\langle y , \theta \rangle - \varepsilon \, \Omega(y) \big\}\, .
\end{equation}
\end{prop}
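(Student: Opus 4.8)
The plan is to reduce the statement to the classical convex-analytic identity that the gradient of a smooth convex function is the maximizer defining its conjugate, after first relating $F_\varepsilon$ to $F_1$ by a scaling argument.

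First I would observe that $F(\theta)=\max_{y\in\cC}\langle y,\theta\rangle$ is the support function of $\cC$, hence convex and positively homogeneous of degree one: $F(\lambda\theta)=\lambda F(\theta)$ for $\lambda>0$. Applying this homogeneity pointwise inside the expectation, I factor $\varepsilon$ out of $F(\theta+\varepsilon Z)=\varepsilon F(\theta/\varepsilon + Z)$ and take expectations to obtain the scaling relation
\[
F_\varepsilon(\theta)=\varepsilon\,F_1(\theta/\varepsilon).
\]

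Next I would compute the Fenchel conjugate of $F_\varepsilon$ from this relation. Writing $(F_\varepsilon)^\star(y)=\sup_\theta\{\langle y,\theta\rangle-\varepsilon F_1(\theta/\varepsilon)\}$ and substituting $\phi=\theta/\varepsilon$ gives directly $(F_\varepsilon)^\star=\varepsilon\,\Omega$, with $\Omega=(F_1)^\star$ the dual in the statement. I would then invoke gradient--conjugate duality: $F_\varepsilon$ is finite and convex (an expectation of maxima of linear forms), and by the stochastic smoothing coming from the differentiable density $\mu$ it is differentiable with $y^*_\varepsilon=\nabla F_\varepsilon$, as recorded in Definition~\ref{def:perturb}. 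For such a closed proper convex function, Fenchel--Young equality forces $\nabla F_\varepsilon(\theta)$ to be the unique maximizer of $y\mapsto\langle y,\theta\rangle-(F_\varepsilon)^\star(y)$. Substituting $(F_\varepsilon)^\star=\varepsilon\Omega$ yields
\[
y^*_\varepsilon(\theta)=\nabla F_\varepsilon(\theta)=\argmax_{y}\{\langle y,\theta\rangle-\varepsilon\,\Omega(y)\},
\]
which is the claim. It remains to justify that the maximization can be restricted to $\cC$: since $\cC$ is bounded, subadditivity of the support function shows $F_1-F$ is bounded (controlled by a first moment of $Z$), so $\Omega=(F_1)^\star$ and $F^\star=\iota_\cC$ differ by a bounded amount and hence share the domain $\cC$, making the unconstrained argmax automatically lie in $\cC$.

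The main obstacle I expect is the convex-analytic bookkeeping rather than any single hard estimate: one must check that $F_\varepsilon$ is proper, closed, and essentially smooth so that the conjugate maximizer is unique and equals $\nabla F_\varepsilon$, and one must pin down $\dom(\Omega)=\cC$, which needs a finite-moment hypothesis on $Z$ to keep $F_1$ within bounded distance of the support function $F$. The scaling identity and the gradient-of-conjugate step are otherwise routine.
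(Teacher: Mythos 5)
Your proof is correct and follows essentially the same route as the paper's: the scaling identity $F_\varepsilon(\theta)=\varepsilon F_1(\theta/\varepsilon)$ yielding $(F_\varepsilon)^\star=\varepsilon\,\Omega$ (this is exactly the ``impact of $\varepsilon$'' part of Proposition~\ref{PRO:fundPROP}, which the paper's proof cites), followed by gradient--conjugate (Fenchel--Rockafellar/Fenchel--Young) duality identifying $y^*_\varepsilon(\theta)=\nabla_\theta F_\varepsilon(\theta)$ as the unique maximizer of $y\mapsto\langle y,\theta\rangle-\varepsilon\,\Omega(y)$. Your extra verification that $\dom(\Omega)=\cC$, via the boundedness of $F_1-F$ under a first-moment condition on $Z$, is a detail the paper simply asserts in the statement rather than proving, and is a sound addition.
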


\paragraph{Differentiation and associated loss function.}


While these connections have been studied before \citep{hazan2016perturbations,AbeLeeSin14,abernethy2016perturbation}, we provide two key new insights.
First, the perturbed model allows to take derivatives with respect to the input~$\theta$ of $F_\varepsilon$ and of $y^*_\varepsilon$ (Proposition~\ref{PRO:fundPROP}). These derivatives are also easily expressed as expectations involving $F$ and $y^*$ with noisy inputs, as  discussed in Section~\ref{sec:diff}. In turn, this yields fast computational methods for these functions and their derivatives. 
Second, by the duality point of view describing $y^*_\varepsilon$ as a regularized maximizer, there exists a natural convex loss for this model that can be efficiently optimized in $\theta$, for data $y_i \in \cY$. We describe this formalism in Section~\ref{SEC:FY}, and apply it in experiments in Section~\ref{SEC:expe}.

\paragraph{Properties of the model.}

This model modifies the maximum and maximizer by perturbation. Because of the simple action of the stochastic noise , we can analyze their properties precisely.

\begin{prop}
\label{PRO:fundPROP}
Assume $\cC$ is a convex polytope with non-empty interior, and $\mu$ has positive differentiable density. The perturbed model $p_\theta$ and the associated functions $F_\varepsilon$, $\Omega = (F_\varepsilon)^\star$, and $y^*_\varepsilon$ have the following properties, for $R_\cC=\max_{y \in \cC}\|y\|$ and $M_\mu = \E[\|\nabla_z\nu(Z)\|^2]^{1/2}$:
\begin{itemize}[topsep=0pt,itemsep=2pt,parsep=2pt,leftmargin=5pt]
    \item[-] $F_\varepsilon$ is strictly convex, twice differentiable, $R_\cC$-Lipschitz-continuous and its gradient is $R_\cC M_\mu/\varepsilon$-Lipschitz-continuous. Its dual $\Omega$ is $1/(R_\cC M_\mu)$-strongly convex, differentiable, and Legendre-type. 
    \item[-] For all $\theta \in \R^d$, $y^*_\varepsilon(\theta)$ is in the interior of $\cC$ and $y^*_\varepsilon$ is differentiable in $\theta$. 
    \item[-] Impact of $\varepsilon>0$: we have $F_\varepsilon(\theta) =\varepsilon F_1\big(\frac{\theta}{\varepsilon}\big), \, F_\varepsilon^*(y) = \varepsilon \Omega(y) , \, y^*_{\varepsilon} (\theta) = y^*_{1}\big(\frac{\theta}{\varepsilon}\big)$.
\end{itemize}
\end{prop}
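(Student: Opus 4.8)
\emph{Scaling and exact derivative formulas.} The plan is to prove the third bullet first, as it is a direct consequence of positive homogeneity of $y\mapsto\langle y,\theta\rangle$: pulling $\varepsilon>0$ out of the inner maximum gives $F_\varepsilon(\theta)=\E[\max_{y\in\cC}\langle y,\theta+\varepsilon Z\rangle]=\varepsilon\,\E[\max_{y\in\cC}\langle y,\theta/\varepsilon+Z\rangle]=\varepsilon F_1(\theta/\varepsilon)$, and the identities $F_\varepsilon^\star=\varepsilon\Omega$ and $y^*_\varepsilon(\theta)=y^*_1(\theta/\varepsilon)$ then follow by taking the Fenchel conjugate and the gradient of this relation. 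Next, rather than differentiating the nonsmooth map $F$, I would write $F_\varepsilon$ as a smoothing convolution, $F_\varepsilon(\theta)=\int_{\R^d}F(w)\,\mu((w-\theta)/\varepsilon)\,\varepsilon^{-d}\,\ud w$ with $\mu\propto e^{-\nu}$, and differentiate under the integral in $\theta$, so that all derivatives fall on the smooth positive kernel and none on $F$. Using $\nabla_\theta\mu((w-\theta)/\varepsilon)=\tfrac1\varepsilon(\nabla\nu)((w-\theta)/\varepsilon)\,\mu((w-\theta)/\varepsilon)$ and changing variables back to $Z$ yields $y^*_\varepsilon(\theta)=\nabla F_\varepsilon(\theta)=\E[y^*(\theta+\varepsilon Z)]=\tfrac1\varepsilon\E[F(\theta+\varepsilon Z)\nabla\nu(Z)]$ and, differentiating once more, $\nabla^2F_\varepsilon(\theta)=\tfrac1\varepsilon\E[y^*(\theta+\varepsilon Z)\,\nabla\nu(Z)^\top]$. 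This simultaneously gives twice-differentiability of $F_\varepsilon$ and differentiability of $y^*_\varepsilon=\nabla F_\varepsilon$; the interchange is justified by dominated convergence, valid because $F$ grows at most linearly while the kernel and its first two $\theta$-derivatives are integrable once $M_\mu=\E[\|\nabla\nu(Z)\|^2]^{1/2}<\infty$.

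\emph{Quantitative bounds and interiority.} The $R_\cC$-Lipschitz property of $F_\varepsilon$ follows from $\|\nabla F_\varepsilon(\theta)\|=\|\E[y^*(\theta+\varepsilon Z)]\|\le R_\cC$, since $y^*(\cdot)\in\cC$ and $R_\cC=\max_{y\in\cC}\|y\|$. For the gradient's Lipschitz constant I would bound the operator norm of the Hessian: for a unit vector $v$, the formula above together with two applications of Cauchy--Schwarz (pointwise, then in expectation) gives $\|\nabla^2F_\varepsilon(\theta)v\|\le\tfrac1\varepsilon R_\cC\,\E[|\langle\nabla\nu(Z),v\rangle|]\le\tfrac1\varepsilon R_\cC M_\mu$, so $\nabla F_\varepsilon$ is $R_\cC M_\mu/\varepsilon$-Lipschitz. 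For the second bullet, I would write $y^*_\varepsilon(\theta)=\sum_{y\in\cY}p_\theta(y)\,y$ and note that every vertex satisfies $p_\theta(y)=\Pro{\theta+\varepsilon Z\in N(y)}>0$: the normal cone $N(y)$ at a vertex of the full-dimensional polytope $\cC$ has nonempty interior, and the density of $Z$ is positive everywhere. Hence $y^*_\varepsilon(\theta)$ is a strict convex combination of all extreme points of $\cC$, which places it in the nonempty interior of $\cC$.

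\emph{Duality and the main difficulty.} By the scaling identity it suffices to treat $\varepsilon=1$. The bound $\|\nabla^2F_1\|\le R_\cC M_\mu$ shows $F_1$ has $R_\cC M_\mu$-Lipschitz gradient, so by the standard smoothness/strong-convexity duality its conjugate $\Omega=F_1^\star$ is $1/(R_\cC M_\mu)$-strongly convex, which also forces $\Omega$ to be essentially strictly convex with domain $\cC$. To upgrade to ``differentiable and Legendre-type'' I would invoke Rockafellar's theorem that $f$ is of Legendre type iff $f^\star$ is, with $\nabla f$ and $\nabla f^\star$ mutually inverse homeomorphisms between the interiors of the domains: here $F_1$ is finite and differentiable on all of $\R^d$, hence essentially smooth, so the only missing ingredient is essential strict convexity of $F_1$, i.e. positive-definiteness of $\nabla^2F_1$. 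This is the main obstacle, and the only place where positivity of the density is essential. I would prove it by showing that for every $v\ne0$ the directional second derivative $v^\top\nabla^2F_\varepsilon(\theta)v=\tfrac{d}{ds}\E[\langle y^*(\theta+sv+\varepsilon Z),v\rangle]\big|_{s=0}$ is strictly positive: for each fixed perturbation the scalar $s\mapsto\langle y^*(\theta+sv+\varepsilon Z),v\rangle$ is nondecreasing, being the derivative of the convex map $s\mapsto F(\theta+sv+\varepsilon Z)$, and it strictly increases whenever $\theta+sv+\varepsilon Z$ crosses the boundary between the normal cones of two vertices with distinct $v$-projection; since these cones are full-dimensional and the noise density is positive, such a crossing occurs on a set of perturbations of positive measure, so the expectation is strictly increasing and $\nabla^2F_1$ has trivial kernel. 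With strict convexity established, $F_1$ is of Legendre type and all stated properties of $\Omega$ follow.
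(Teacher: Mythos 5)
Your proposal is correct in substance and shares most of its skeleton with the paper's proof: the scaling identities for $\varepsilon$, the interiority argument via $p_\theta(y)>0$ for every vertex (positive density on full-dimensional normal cones, then a strict convex combination of all extreme points), the Cauchy--Schwarz bound giving the $R_\cC M_\mu/\varepsilon$ gradient-Lipschitz constant, and strong convexity of $\Omega$ by smoothness/strong-convexity duality all match. You differ in three places. First, where the paper simply cites the integration-by-parts formulas (its Proposition~\ref{PRO:ipp}, quoted from the perturbations literature) to get twice differentiability, you re-derive them by writing $F_\varepsilon$ as a convolution and letting all derivatives fall on the kernel; this makes the argument self-contained, and your Hessian formula $\nabla^2F_\varepsilon(\theta)=\tfrac1\varepsilon\E[y^*(\theta+\varepsilon Z)\nabla\nu(Z)^\top]$ correctly needs only one derivative of $\nu$. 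Second, for strict convexity the paper runs a Jensen-equality argument (non-strict convexity forces $F$ to be affine on the segments $[\theta+\varepsilon z,\theta'+\varepsilon z]$ for a.e.~$z$, contradicting the normal-fan geometry), while you show the directional derivative $s\mapsto\langle\nabla F_\varepsilon(\theta+sv),v\rangle$ is strictly increasing via cone crossings; the geometric core is the same, and you should make it explicit: a priori every crossing reachable along direction $v$ could join vertices with equal $v$-projection, and ruling this out is exactly the paper's observation that if $v$ were orthogonal to all pairwise differences of vertices, full-dimensionality of $\cC$ would force $v=0$, so some facet of the normal fan is crossed transversally by a positive-measure set of perturbed segments. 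Third, for the Legendre-type property the paper argues directly (a bounded $\nabla\Omega$ near a boundary point $y$ would yield $y^*_\varepsilon(\theta)=y$, contradicting interiority), whereas you invoke Rockafellar's conjugacy theorem ($f$ is Legendre iff $f^\star$ is, with mutually inverse gradient maps): since $F_1$ is finite and differentiable on all of $\R^d$ it is essentially smooth, and only essential strict convexity remains. Both routes work; yours outsources the boundary behavior to a classical theorem, the paper's is self-contained.

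One local flaw to correct: you identify essential strict convexity of $F_1$ with positive-definiteness of $\nabla^2F_1$, and claim your monotonicity argument yields a trivial Hessian kernel. Neither step is valid: a strictly increasing differentiable function $g(s)=\langle\nabla F_\varepsilon(\theta+sv),v\rangle$ may still satisfy $g'(0)=0$ (as $s\mapsto s^3$ shows), so pointwise positivity of $v^\top\nabla^2F_\varepsilon(\theta)v$ does not follow from what you prove. The error is harmless to your conclusion, however: the crossing argument establishes that $F_\varepsilon$ is strictly convex along every line, hence strictly convex, and since $\dom F_1=\R^d$ this is precisely essential strict convexity --- which is all that Rockafellar's theorem requires, and all that the proposition asserts (the paper, too, claims only strict convexity, never a positive-definite Hessian). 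Restate that step as strict convexity rather than Hessian positive-definiteness and the proof is complete.
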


We develop in further details the simple expressions for derivatives of $F_\varepsilon$ and $y^*_\varepsilon$ in Section~\ref{sec:diff}. By this proposition, since $F_\varepsilon$ is strictly convex, it is nowhere locally linear, so $y^*_\varepsilon$ is nowhere locally constant. Formally, $y^*_\varepsilon = \nabla_\theta F_\varepsilon$ is a {\em mirror map}, a one-to-one mapping from $\R^d$ unto the interior of $\cC$. The gradient of $\varepsilon \Omega$ is its functional inverse, by convex duality between these functions (see, e.g., surveys \citep{wainwright2008graphical,bubeck2015convex} and and references therein).
\begin{rem}
\label{REM:empty}
For these properties to hold, it is crucial that $\cC$ has non-empty interior, i.e., that $\cY$ does not lie in an affine subspace of lower dimension. To adapt to cases where $\cC$ lies in a subspace, we consider the set of inputs $\theta$ up to vectors orthogonal to $\cC$, or represent $\cY$ in a lower-dimensional subspace. As an example, over the unit simplex and Gumbel noise, the log-sum-exp is not strictly convex, and in fact linear along the all-ones vector $\mathbf{1}$. 
In such cases, the model is only well-specified in $\theta$ up to the space orthogonal to $\cC$, which does not affect prediction tasks.
\end{rem}

For any positive temperature $\varepsilon$, these properties imply that there is an informative, well-defined, and nonzero gradient in $\theta$. They also imply the limiting behavior at extreme temperatures.
\begin{prop}
\label{PRO:temperature}
With the conditions of Proposition~\ref{PRO:fundPROP}, for~$\theta$ such that $y^*(\theta)$ is a unique maximum:

For $\varepsilon \to 0$, $F_\varepsilon(\theta) \to F(\theta)$ and $y^*_\varepsilon(\theta) \to y^*(\theta)$. For $\varepsilon \to \infty$, $y^*_\varepsilon(\theta) \to y^*_1(0) = \argmin_{y \in \cC} \Omega(y)$.

For every $\varepsilon>0$, we have $F(\theta) - F_\varepsilon(\theta) \le C \varepsilon$ and $\langle y^*(\theta) , \theta \rangle - \langle y^*_\varepsilon(\theta) , \theta \rangle \le C' \varepsilon$, for $C, C'> 0$.
\end{prop}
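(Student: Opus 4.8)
The plan is to reduce everything to the variational (dual) description of $F_\varepsilon$ and $y^*_\varepsilon$, which lets me avoid any moment assumptions on $Z$. First I would record the representation following from Proposition~\ref{PRO:fundPROP} and Proposition~\ref{PRO:duality}: since $F_\varepsilon^\star = \varepsilon\,\Omega$ with $\dom\Omega = \cC$, and $F_\varepsilon$ is convex and continuous, biconjugation gives
\[
F_\varepsilon(\theta) = \max_{y\in\cC}\big\{\langle y,\theta\rangle - \varepsilon\,\Omega(y)\big\},
\]
the maximum being attained at $y^*_\varepsilon(\theta)$ (it exists because $\cC$ is compact and $\Omega$ is lower semicontinuous, and is finite since $\Omega$ is finite on its domain $\cC$). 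Writing $y^* = y^*(\theta)$ and $F(\theta) = \langle y^*,\theta\rangle$, plugging $y = y^*$ into the display gives $F_\varepsilon(\theta)\ge F(\theta)-\varepsilon\,\Omega(y^*)$, while replacing $\Omega$ by $\min_{y\in\cC}\Omega(y)$ inside the maximum gives $F_\varepsilon(\theta)\le F(\theta)-\varepsilon\min_{y\in\cC}\Omega(y)$. Hence
\[
\varepsilon\min_{y\in\cC}\Omega(y) \;\le\; F(\theta)-F_\varepsilon(\theta)\;\le\;\varepsilon\,\Omega(y^*),
\]
which proves $F(\theta)-F_\varepsilon(\theta)\le C\varepsilon$ with $C = \Omega(y^*)$ (enlarged to a positive value if needed) and, letting $\varepsilon\to 0$, also yields $F_\varepsilon(\theta)\to F(\theta)$.

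For the second quantitative bound I would compare $y^*_\varepsilon(\theta)$ with $y^*$ inside the regularized problem. Optimality of $y^*_\varepsilon(\theta)$ in $\max_{y\in\cC}\{\langle y,\theta\rangle-\varepsilon\Omega(y)\}$ gives $\langle y^*_\varepsilon(\theta),\theta\rangle-\varepsilon\Omega(y^*_\varepsilon(\theta))\ge\langle y^*,\theta\rangle-\varepsilon\Omega(y^*)$; rearranging and using $\Omega(y^*_\varepsilon(\theta))\ge\min_{y\in\cC}\Omega(y)$ gives
\[
\langle y^*,\theta\rangle-\langle y^*_\varepsilon(\theta),\theta\rangle\;\le\;\varepsilon\big(\Omega(y^*)-\Omega(y^*_\varepsilon(\theta))\big)\;\le\;\varepsilon\big(\Omega(y^*)-\min_{y\in\cC}\Omega(y)\big),
\]
that is, the claim with $C' = \Omega(y^*)-\min_{y\in\cC}\Omega(y)\ge 0$ (again enlarged to be positive).

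This last estimate is also the key to the $\varepsilon\to 0$ limit of the maximizer, which I expect to be the main obstacle since argmax convergence requires care. The bound shows $\langle y^*_\varepsilon(\theta),\theta\rangle\to F(\theta)$, and I would then argue by compactness and uniqueness: $y^*_\varepsilon(\theta)\in\cC$, so any sequence $\varepsilon_k\to 0$ admits a subsequence with $y^*_{\varepsilon_k}(\theta)\to\bar y\in\cC$; by continuity $\langle\bar y,\theta\rangle = F(\theta) = \max_{y\in\cC}\langle y,\theta\rangle$, so $\bar y$ is a maximizer and hence $\bar y = y^*$ by the uniqueness hypothesis on $y^*(\theta)$. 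Since every limit point equals $y^*$ and $\cC$ is compact, $y^*_\varepsilon(\theta)\to y^*(\theta)$. This is exactly where uniqueness of $y^*(\theta)$ is indispensable; a more hands-on alternative is to write $y^*_\varepsilon(\theta)=\E[y^*(\theta+\varepsilon Z)]$ and combine the almost-sure convergence $y^*(\theta+\varepsilon Z)\to y^*(\theta)$ (upper hemicontinuity of the argmax at a point of uniqueness) with bounded convergence, but the variational route above is cleaner and free of extra assumptions.

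Finally, for $\varepsilon\to\infty$ I would invoke the scaling identity of Proposition~\ref{PRO:fundPROP}, namely $y^*_\varepsilon(\theta)=y^*_1(\theta/\varepsilon)$. Since $y^*_1$ is differentiable, hence continuous, and $\theta/\varepsilon\to 0$, we get $y^*_\varepsilon(\theta)\to y^*_1(0)$; and applying Proposition~\ref{PRO:duality} at $\theta = 0$, $\varepsilon = 1$ identifies $y^*_1(0)=\argmax_{y\in\cC}\{-\Omega(y)\}=\argmin_{y\in\cC}\Omega(y)$, completing the proof.
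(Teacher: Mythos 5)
Your proof is correct, and on both maximizer limits it coincides with the paper's own argument: the key inequality $0 \le \langle y^*(\theta),\theta\rangle - \langle y^*_\varepsilon(\theta),\theta\rangle \le \varepsilon\bigl(\Omega(y^*(\theta)) - \Omega(y^*_\varepsilon(\theta))\bigr)$ obtained from optimality in the regularized problem of Proposition~\ref{PRO:duality}, the compactness-plus-uniqueness subsequence argument for $\varepsilon\to 0$, and the scaling identity $y^*_\varepsilon(\theta) = y^*_1(\theta/\varepsilon)$ combined with continuity for $\varepsilon\to\infty$ are exactly the steps in the paper (which follows Proposition~4.1 of \citealp{peyre2019computational}); you add the small extra courtesy of identifying $y^*_1(0)=\argmin_{y\in\cC}\Omega(y)$ via duality at $\theta=0$, which the paper states without comment. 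The one genuine divergence is the value convergence $F_\varepsilon(\theta)\to F(\theta)$: the paper works on the primal side, getting $F(\theta)\le F_\varepsilon(\theta)$ by Jensen's inequality and $F_\varepsilon(\theta)\le F(\theta)+\varepsilon\,\E[\max_{y\in\cC}\langle y,Z\rangle]$ by subadditivity of the max, whereas you stay entirely on the dual side and sandwich $F(\theta)-F_\varepsilon(\theta)$ between $\varepsilon\min_{y\in\cC}\Omega(y)$ and $\varepsilon\,\Omega(y^*(\theta))$. These are essentially equivalent --- indeed $\min_{y\in\cC}\Omega(y) = -F_1(0)$, so your lower bound is literally the paper's upper bound restated --- but your route yields explicit constants ($C=\Omega(y^*(\theta))$ suitably enlarged, $C'=\Omega(y^*(\theta))-\min_{y\in\cC}\Omega(y)$) and never touches the noise distribution directly, needing only that $\Omega$ is finite on $\cC$ and lower semicontinuous (which you correctly flag), while the paper's Jensen step additionally gives $F(\theta)-F_\varepsilon(\theta)\le 0$, making the first stated inequality hold trivially. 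The only content of the paper's proof you do not reproduce is a closing remark, outside the statement's hypotheses, that without uniqueness of $y^*(\theta)$ the perturbed maximizer converges to the $\Omega$-minimizer over the optimal face; since the proposition explicitly assumes uniqueness, its omission is not a gap.
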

The properties of the distributions $p_\theta$ in this model are well studied in the perturbations literature (see, e.g., \citep{hazan2016perturbations} for a survey). They notably do not have a simple closed-form expression, but can be very easy to sample from. By the argmax definition, simulating $Y \sim p_\theta$, only requires to sample $\mu$ (e.g., Gaussian, or vector of i.i.d.~Gumbel), and to solve the original optimization problem. It is the case in the applications we consider (e.g., max, ranking, shortest paths). This is in stark contrast to the Gibbs distribution, which has the opposite properties.

\section{Differentiation of soft maximizers}\label{sec:diff}

As noted above, for the right noise distributions, the perturbed maximizer $y^*_\varepsilon$ is differentiable in its inputs, with non-zero Jacobian. It is based on integration by parts, not on finite differences as in \citep{lorberbom_2019}. 
\begin{prop}{\citep[][Lemma 1.5]{abernethy2016perturbation}}
\label{PRO:ipp}
For noise $Z$ with distribution $\ud\mu(z) \propto \exp(-\nu(z)) \ud z$ and twice differentiable $\nu$, the following holds (with $J_\theta \, y_\varepsilon^*(\theta)$ the Jacobian matrix of $y_\varepsilon^*$ at $\theta$):
\begin{align*}
F_\varepsilon(\theta) &= \E[F(\theta+ \varepsilon Z)] \,, \quad y_\varepsilon^*(\theta) = \nabla_\theta F_\varepsilon(\theta)=  \E[y^*(\theta+\varepsilon Z)] = \E[F(\theta+\varepsilon Z) \nabla_z \nu(Z)/\varepsilon] \,, \\
J_\theta \, y_\varepsilon^*(\theta)&= \E[y^*(\theta+\varepsilon Z) \nabla_z \nu(Z)^\top/\varepsilon] = \E[F(\theta+\varepsilon Z) (\nabla_z \nu(Z) \nabla_z \nu(Z)^\top - \nabla_z^2 \nu(Z))/\varepsilon^2 ]\, .
\end{align*}
\end{prop}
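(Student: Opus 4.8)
The plan is to move the dependence on $\theta$ out of the non-smooth functions $F$ and $y^*$ and into the smooth density of $Z$ by a change of variables, after which every formula follows by differentiating under the integral sign. Write $\ud\mu(z) = \exp(-\nu(z))\,\ud z / C$ with $C = \int_{\R^d}\exp(-\nu(z))\,\ud z$. First I would record the substitution $w = \theta + \varepsilon z$, which gives, for any suitably integrable $g$,
\[
\E[g(\theta+\varepsilon Z)] = \frac{1}{C\varepsilon^d}\int_{\R^d} g(w)\,\exp\!\big(-\nu((w-\theta)/\varepsilon)\big)\,\ud w \,.
\]
The integrand now depends on $\theta$ only through the smooth factor $\exp(-\nu((w-\theta)/\varepsilon))$, so I can differentiate under the integral sign. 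With $u=(w-\theta)/\varepsilon$ the chain rule gives $\nabla_\theta\exp(-\nu(u)) = \varepsilon^{-1}\nabla_z\nu(u)\exp(-\nu(u))$ and, for the second derivative, $\nabla_\theta^2\exp(-\nu(u)) = \varepsilon^{-2}\big(\nabla_z\nu(u)\nabla_z\nu(u)^\top - \nabla_z^2\nu(u)\big)\exp(-\nu(u))$. Changing variables back then produces exactly the claimed factors $\nabla_z\nu(Z)/\varepsilon$ and $(\nabla_z\nu(Z)\nabla_z\nu(Z)^\top - \nabla_z^2\nu(Z))/\varepsilon^2$.

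Concretely, I would first apply this with $g=F$: differentiating once yields $\nabla_\theta F_\varepsilon(\theta) = \E[F(\theta+\varepsilon Z)\nabla_z\nu(Z)/\varepsilon]$, and differentiating twice yields the $F$-form of $J_\theta\, y^*_\varepsilon$ (using $\nabla_\theta^2 F_\varepsilon = J_\theta\, y^*_\varepsilon$ since $\nabla_\theta F_\varepsilon = y^*_\varepsilon$). The identity $\nabla_\theta F_\varepsilon(\theta) = \E[y^*(\theta+\varepsilon Z)]$ I would obtain separately by differentiating $F_\varepsilon(\theta)=\E[F(\theta+\varepsilon Z)]$ with the derivative kept on $F$, using that $F$ is convex and $R_\cC$-Lipschitz, hence differentiable with $\nabla_\theta F = y^*$ off a Lebesgue-null set, and that the absolute continuity of $\mu$ makes this null set invisible to the expectation (so $y^*(\theta+\varepsilon Z)=\nabla_\theta F(\theta+\varepsilon Z)$ a.s.). Applying the substitution instead with $g=y^*$ and differentiating once gives $J_\theta\, y^*_\varepsilon(\theta) = \E[y^*(\theta+\varepsilon Z)\nabla_z\nu(Z)^\top/\varepsilon]$. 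The two $F$-forms and the two $y^*$-forms agree because all are expressions for the same first and second derivatives of $F_\varepsilon$.

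The main obstacle is purely the justification of the interchange of derivative and integral, i.e.\ producing an integrable dominating function for the $\theta$-derivatives on a neighborhood of each $\theta$. This is where the regularity of $\nu$ and the moment hypotheses enter: $F$ has at most linear growth in $\|z\|$ (as a max of linear forms over the bounded set $\cC$) and $y^*$ is bounded by $R_\cC$, so it suffices that $\norm{\nabla_z\nu}$, $\norm{\nabla_z\nu}^2$ and $\norm{\nabla_z^2\nu}$ be integrable against $\mu$ — quantities controlled by $M_\mu=\E[\norm{\nabla_z\nu(Z)}^2]^{1/2}$, the twice-differentiability of $\nu$, and Cauchy--Schwarz against the moments of $Z$. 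Once a local $L^1(\mu)$ bound uniform over $\theta$ in a compact neighborhood is in hand, Leibniz's rule applies and the interchange is legitimate; everything else is the routine chain-rule bookkeeping sketched above.
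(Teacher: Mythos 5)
Your proof is correct and takes essentially the same route as the paper's source: the paper offers no in-house proof of this proposition, deferring to \citep[Lemma 1.5]{abernethy2016perturbation}, whose argument is precisely your change of variables $w=\theta+\varepsilon z$ followed by differentiation of the translated density under the integral sign --- the ``integration by parts'' the paper alludes to in Section~\ref{sec:diff}. Your separate derivation of $\nabla_\theta F_\varepsilon(\theta)=\E[y^*(\theta+\varepsilon Z)]$ (a.e.\ differentiability of the Lipschitz convex $F$ plus absolute continuity of $\mu$, with the difference quotients dominated by $R_\cC$) and your explicit flagging of the domination hypotheses needed for Leibniz's rule are consistent with, and if anything slightly more careful than, the cited treatment.
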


The derivatives are simple expectations. We discuss in the following subsection efficient techniques to evaluate in practice $y^*_\varepsilon(\theta)$ and its Jacobian, or to generate stochastic gradients, based on these expressions.
\begin{rem}
\label{REM:learning}
Being able to compute the perturbed maximizer and its Jacobian allows to optimize functions that depend on~$\theta$ through $y^*_\varepsilon (\theta)$. This can be used to alter the costs to promote solutions with certain desired properties. Moreover, in a supervised learning setting, this allows to train models containing blocks with inputs $\theta = g_w(x)$, for some feature vector $x$, 
by minimizing a loss $\ell$ between the perturbed maximizer $y^*_\varepsilon (\theta)$ and the ground-truth $y$, 
\begin{equation}
\label{EQ:composite_loss}
\ell(y^*_\varepsilon(\theta), y)
= \ell(y^*_\varepsilon(g_w(x)), y).
\end{equation}
For first-order methods, differentiating the above w.r.t. $w$  requires not only the usual model-dependent Jacobian $J_w g_w(x)$, but also a gradient in the first argument of the loss $\ell$. If this block is a strict discrete maximizer $y^*$, as noted above, the computational graph is broken. However, with our proposed modification, we have that the gradient of Eq.~\eqref{EQ:composite_loss} w.r.t. $\theta$ is equal to
\begin{equation}
\label{eq:anyloss}
J_\theta \,  y^*_\varepsilon(\theta) \, \nabla \ell(y^*_\varepsilon(\theta), y)\, ,
\end{equation}
where $\nabla \ell$ is the gradient w.r.t. the first argument of $\ell$.
Thus, the gradient can be fully backpropagated. Perturbed maximizers can therefore be used in end-to-end prediction models, for any loss $\ell$ on the perturbed maximizer. Furthermore, we describe in Section~\ref{SEC:FY} a loss that can be directly optimized in $\theta$ by first-order methods. It comes with a strong algorithmic advantage, as it requires only to compute the perturbed maximizer and not its Jacobian.
\end{rem}

\paragraph{Practical implementation.}

 For any $\theta$,  the perturbed maximizer $y^*_\varepsilon(\theta)$ is a solution of a convex optimization problem in Eq.~(\ref{EQ:regul}), allowing computation if $\Omega$ has a simple form. More generally, by their expressions as expectations, the perturbed maximizer and its Jacobian can be approximated with Monte-Carlo methods. This only requires to efficiently sample from $\mu$, and to solve LPs over $\cC$.

\begin{defn} Given $\theta\in\R^d$, let $(Z^{(1)},\ldots,Z^{(M)})$ be $M$ i.i.d. copies of $Z$ and, for $m=1,\ldots,M$,
 \[ \textstyle
 y^{(m)} = y^*(\theta + \varepsilon Z^{(m)}) = \argmax_{y \in \cC} \langle y , \theta+ \varepsilon Z^{(m)}\rangle\, .
 \]
A \emph{Monte-Carlo estimate} $\bar y_{\varepsilon,M}(\theta)$ of $y_\varepsilon^*(\theta)$ is given by 
\[\bar y_{\varepsilon,M}(\theta) = \frac 1M \sum_{m = 1}^M y^{(m)}\, .\]
\end{defn} 
Since $\E[y^{(m)}] =  y_\varepsilon^*(\theta)$ for every $m \in \{1,\dots,M\}$, by definition of $p_\theta$, it is an unbiased estimate of $y_\varepsilon^*(\theta)$. Note that the formulae in Proposition~\ref{PRO:ipp} give several manners to stochastically approximate $F_\varepsilon$, $y^*_\varepsilon$, and their derivatives by using $F(\theta+\varepsilon Z^{(m)})$, $y^*(\theta+\varepsilon Z^{(m)})$ and $\nabla_z \nu(Z^{(m)})$ and averages. This yields unbiased estimates for $F_\varepsilon$, $y^*_\varepsilon$, and its Jacobian. The plurality of these formulae gives the user several options for practical implementation. For both $y^*_\varepsilon$ and its Jacobian, we use the first one presented in Proposition~\ref{PRO:ipp} for our applications.

A great strength of this method is the absence of conceptual or computational overhead. Further, even though our analysis relies on the specific structure of the problem as an LP, these algorithms do not. The Monte-Carlo estimates can be obtained by using a function $y^*$ as a blackbox, without requiring knowledge of the problem or of the algorithm that solves it. For instance, for ranking, solving the LP only involves a sort.

If $y^*_\varepsilon$ or its derivatives are used in stochastic gradient descent for training in supervised learning, a full approximation of the gradients is not always necessary. Taking only $M=1$ (or a small number) of observations is acceptable here, as the gradients are stochastic in the first place.
 
 With parallelization and warm starts, we can alleviate the dependency in $M$ of the running time: We can independently sample the $Z^{(m)}$ and compute the $y^{(m)} = y^*(\theta + \varepsilon Z^{(m)})$ in parallel. On the other hand, starting from a solution or near-solution (such as $y^*(\theta)$) as initialization can improve running times dramatically, especially at lower temperatures.

\section{Perturbed model learning with Fenchel-Young losses}
\label{SEC:FY}

There is a large literature on learning parameters of a Gibbs distribution based on data $(y_i)_{i=1,\ldots,n}$, through maximization of the likelihood:
\begin{equation}
\label{EQ:gibbsMLE}
 \textstyle   \bar \ell_n(\theta) \!=\! \frac 1n \sum_{i=1}^n \log p_{{\sf Gibbs}, \theta}(y_i) \!=\! \frac{1}{n} \sum_{i=1}^n \langle y_i , \theta \rangle - \log Z(\theta)\; \text{with} \; \nabla_\theta \bar \ell_n(\theta)\! = \! \frac{1}{n} \sum_{i=1}^n y_i - \E_{{\sf Gibbs}, \theta}[Y].
\end{equation}
The expression of the gradient justifies the name of moment-matching procedures. The expectation of the Gibbs is however hard to evaluate in some cases.
For instance, for permutation problems, it is known to be \#P-hard to compute \citep{valiant1979complexity,taskar-thesis}.
This motivates its replacement by $p_\theta$ (perturb-and-MAP in this literature), and to use this method as a proxy for log-likelihood to learn the parameters \citep{papandreou_2011}.

 We show here that this approach can be formally analyzed by the use of Fenchel-Young losses \citep{blondel_2019} in this context. It is equivalent to maximizing a term akin to Eq.~(\ref{EQ:gibbsMLE}), substituting the log-partition $Z(\theta)$ with $F_\varepsilon(\theta)$.  The use of these losses also drastically improves the algorithmic aspects of the learning tasks, by the specific expression of the gradients of the loss. 
\begin{defn}
\label{definition:FY_loss}
In the perturbed model, the {\em Fenchel-Young} loss $L_\varepsilon(\cdot \, ; y)$ is defined for $\theta \in \R^d$ by
\[
L_\varepsilon(\theta \, ; y) = F_\varepsilon(\theta) + \varepsilon \, \Omega(y) - \langle \theta , y \rangle\, .
\]
\end{defn}

It is nonnegative, convex in $\theta$, and minimized with value 0 if and only if $\theta$ is such that $y^*_\varepsilon(\theta) = y$. It is equal to the Bregman divergence associated to $\varepsilon \Omega$, i.e., $L_\varepsilon(\theta \, ; y) = D_{\varepsilon \Omega}(y, \hat y^*_\varepsilon(\theta))$. As $\theta$ and $y$ interact in this loss only through a scalar product, for random $Y$ we have $\E[L_\varepsilon(\theta ;Y)] = L_\varepsilon(\theta ; \E[Y]) + C$, where $C$ does not depend on $\theta$. This is particularly convenient in analyzing the performance of Fenchel-Young losses in generative models.
The gradient of the loss is
\[
\nabla_\theta L_\varepsilon(\theta \, ; y) = \nabla_\theta F_\varepsilon(\theta) - y =  y^*_\varepsilon(\theta) - y\, .
\]
The Fenchel-Young loss can therefore be interpreted as a loss in $\theta$ that is a function of $y^*_\varepsilon (\theta)$. Moreover, it can be optimized in $\theta$ with first-order methods simply by computing the soft maximizer, without having to compute its Jacobian. It is therefore a particular case of the situation described in Eq.(\ref{EQ:composite_loss}) and~(\ref{eq:anyloss}), allowing to even bypass virtually the perturbed maximizer block in the output, and to directly optimize a loss between observation $y$ and model outputs $\theta = g_w(x)$.


\paragraph{Supervised and unsupervised learning.}

As described in Remark~\ref{REM:learning}, given observations $(x_i,y_i)_{1\le i \le n}\in \cX^n \times \cY^n$, we can fit a model $g_w$ such that $y_\varepsilon^*(g_w(x_i)) \approx y_i$. The Fenchel-Young loss between $g_w(x_i)$ and $y_i$ is a natural way to do so
\[
\label{EQ:genmodel}
L_{\varepsilon, {\sf emp}}(w) = \frac 1n \sum_{i=1}^n L_\varepsilon(g_w(x_i) \, ; y_i) \, ,\;\]

This is motivated by a generative model where, for some $w_0$
\[
y_i = \argmax_{y \in \cC}\ \langle g_{w_0}(x_i) + \varepsilon Z^{(i)},y \rangle \, .
\]
Indeed, under this model the {\em population loss} $\E[L_{\varepsilon, {\sf emp}}(w)]$ is the average of terms $L_\varepsilon(g_w(x_i) \, ;  y^*_\varepsilon(g_{w_0}(x_i)))$, up to an additive constant. The population loss is therefore minimized at $w_0$. The gradient of the empirical loss is given by
\begin{equation*}
\nabla_w L_{\varepsilon, {\sf emp}}(w) = \frac 1n \sum_{i=1}^n J_w \, g_w(x_i) \cdot  (y^*_\varepsilon(g_w(x_i))- y_i)\, .
\end{equation*}
Each term in the sum, gradient of the loss for a single observation, is therefore a stochastic gradient for $L_{\varepsilon, \sf emp}$ (w.r.t. $i$ uniform in $[n]$) or for $L_{\varepsilon, \sf pop}$ (w.r.t. to a random $y_i$ from $p_{g_{w_0}(x_i)}$). 

The methods we described to stochastically approximate the gradient are particularly adapted here. Indeed, following~\citep{shpakova_2016}, given an observation $y_i$ and a current value $\theta_i = g_w(x_i)$, a {\em doubly stochastic} version of the gradient $\nabla_w L_\varepsilon(g_w(x_i) \, ; y_i)$ is obtained by
\begin{equation}
\label{EQ:doublystoch}
   \bar \gamma_{i,M}(w) = J_w \, g_w(x_i) \big(\frac{1}{M} \sum_{m = 1}^M y^*\big(g_w(x_i)+\varepsilon Z^{(m)}\big) - y_i\big)\, .
\end{equation}
This can also be used with a procedure where batches of data points are used to compute approximate gradients, where the number of artificial samples $M$ and the batch size can be chosen separately.

This can  be extended to an unsupervised setting, where observations $(y_i)_{1\le i \le n}\in  \cY^n$ are fitted with a model $p_\theta$, motivated by a generative model where $y_i = \argmax_{y \in \cC}\  \langle \theta_0 + \varepsilon Z_i ,y \rangle$, that is $y_i \sim p_{\theta_0}(y)$, for some unknown $\theta_0$. We have a natural {\em empirical} $\bar L_n$ and {\em population} loss $L_{\theta_0}$:
\[
 \bar L_{\varepsilon, n}(\theta) \!=\! \frac 1n \sum_{i=1}^n L_\varepsilon(\theta \, ; y_i) \!= \! L_\varepsilon(\theta ; \bar Y_n) + C(Y) \,, L_{\varepsilon, \theta_0}(\theta) \!= \! \E[\bar L_{\varepsilon, n}(\theta)] \!= \! L_\varepsilon(\theta ; y^*_\varepsilon(\theta_0)) + C(\theta_0)\, .
\]
Their gradients are given by
\[
\nabla_\theta \bar L_{\varepsilon, n}(\theta) = \nabla_\theta F_\varepsilon(\theta) - \bar Y_n =  y^*_\varepsilon(\theta) - \bar Y_n \,,\quad \text{and} \quad
\nabla_\theta L_{\varepsilon, \theta_0}(\theta) = y^*_\varepsilon(\theta) - y^*_\varepsilon(\theta_0)\, .
\]
The empirical loss is minimized for $\hat \theta_n$ such that $y^*_\varepsilon(\hat \theta_n) = \bar Y_n$ and the population loss when $y^*_\varepsilon(\theta) = y^*_\varepsilon(\theta_0)$. As a consequence, the whole battery of statistical results, from asymptotic to non-asymptotic, can be leveraged, and we present the simplest one (asymptotic normality).
\begin{prop}
\label{PRO:asymptNorm}
When $n$ goes to $\infty$, with the assumptions of Proposition~\ref{PRO:fundPROP} on the model, we have
\[
\sqrt{n}(\hat \theta_n - \theta_0) \to \cN\big(0,\big(\nabla_\theta^2 F_\varepsilon(\theta_0)\big)^{-1}\Sigma_Y \big(\nabla_\theta^2 F_\varepsilon(\theta_0)\big)^{-1}\big)\, ,
\]
in distribution, where $\Sigma_Y$ is the covariance of $Y\sim p_\theta$.
\end{prop}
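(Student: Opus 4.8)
The plan is to recognize $\hat\theta_n$ as a $Z$-estimator and to transport the central limit theorem for the empirical mean $\bar Y_n$ through the inverse of the mirror map $y^*_\varepsilon$. First I would record the estimating equation: since $\bar L_{\varepsilon,n}$ is strictly convex in $\theta$ (its gradient $y^*_\varepsilon(\theta)-\bar Y_n$ has Jacobian $\nabla^2_\theta F_\varepsilon(\theta)$, which is invertible by Proposition~\ref{PRO:fundPROP}), its minimizer is the unique stationary point, so $\hat\theta_n$ is characterized by $y^*_\varepsilon(\hat\theta_n)=\bar Y_n$. Writing $g := y^*_\varepsilon = \nabla_\theta F_\varepsilon$, Proposition~\ref{PRO:fundPROP} tells us $g$ is a $C^1$ diffeomorphism from $\R^d$ onto $\mathrm{int}(\cC)$ (a mirror map) with Jacobian $J_\theta\, g(\theta)=\nabla^2_\theta F_\varepsilon(\theta)$. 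Because the data satisfy $y_i\sim p_{\theta_0}$, Definition~\ref{def:perturb} gives $\E[Y]=y^*_\varepsilon(\theta_0)=g(\theta_0)$, so on the event $\{\bar Y_n\in\mathrm{int}(\cC)\}$ we may write $\hat\theta_n=g^{-1}(\bar Y_n)$ and $\theta_0=g^{-1}(\E[Y])$.

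Next I would invoke the classical limit theorems for i.i.d.\ averages. Since $\cY$ is a finite set, $Y$ is bounded and $\Sigma_Y=\mathrm{Cov}(Y)$ is finite; the strong law of large numbers gives $\bar Y_n\to \E[Y]=y^*_\varepsilon(\theta_0)$ almost surely, and the multivariate central limit theorem gives $\sqrt n\,(\bar Y_n-y^*_\varepsilon(\theta_0))\to \cN(0,\Sigma_Y)$ in distribution. As $y^*_\varepsilon(\theta_0)$ lies in the open interior of $\cC$, the event $\{\bar Y_n\in\mathrm{int}(\cC)\}$ has probability tending to one, so $\hat\theta_n$ is eventually well-defined, and continuity of $g^{-1}$ yields consistency, $\hat\theta_n\to\theta_0$ in probability.

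Finally I would apply the multivariate delta method to the inverse map $g^{-1}$. By the inverse function theorem, $g^{-1}$ is differentiable at $m:=\E[Y]=y^*_\varepsilon(\theta_0)$ with $J\,g^{-1}(m)=\big(J_\theta\,g(\theta_0)\big)^{-1}=\big(\nabla^2_\theta F_\varepsilon(\theta_0)\big)^{-1}$. Combining this with the central limit theorem above yields
\[
\sqrt n\,(\hat\theta_n-\theta_0)=\sqrt n\,\big(g^{-1}(\bar Y_n)-g^{-1}(m)\big)\to \big(\nabla^2_\theta F_\varepsilon(\theta_0)\big)^{-1}\cdot\cN(0,\Sigma_Y)\, .
\]
Since the Hessian $\nabla^2_\theta F_\varepsilon(\theta_0)$ is symmetric, so is its inverse $A:=\big(\nabla^2_\theta F_\varepsilon(\theta_0)\big)^{-1}$, and the limiting law is $\cN(0,A\Sigma_Y A^\top)=\cN(0,A\Sigma_Y A)$, which is exactly the stated sandwich covariance.

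The part requiring the most care is justifying that $g^{-1}$ is well-defined and differentiable near $\E[Y]$, and handling the boundary subtlety that $\bar Y_n$ may fail to be interior for small $n$. Both are resolved by the structure of Proposition~\ref{PRO:fundPROP}: invertibility of $\nabla^2_\theta F_\varepsilon(\theta_0)$ follows from strong convexity of $\Omega$ via the duality $\nabla^2_\theta F_\varepsilon(\theta_0)=\big(\nabla^2(\varepsilon\Omega)(y^*_\varepsilon(\theta_0))\big)^{-1}$, and the population target $\E[Y]=y^*_\varepsilon(\theta_0)$ is strictly interior, so the degenerate events are asymptotically negligible. An equivalent route would be the standard $Z$-estimator master theorem, Taylor-expanding $0=y^*_\varepsilon(\hat\theta_n)-\bar Y_n$ about $\theta_0$; this produces the same asymptotic linearization $\sqrt n(\hat\theta_n-\theta_0)=\big(\nabla^2_\theta F_\varepsilon(\theta_0)\big)^{-1}\sqrt n\,(\bar Y_n-y^*_\varepsilon(\theta_0))+o_P(1)$ and hence the same conclusion.
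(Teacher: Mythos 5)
Your proof is correct, and its main route differs from the paper's. The paper follows the classical M-estimation template: it writes the first-order condition $0 = \nabla_\theta \bar L_{\varepsilon,n}(\hat\theta_n)$, Taylor-expands it around $\theta_0$ \emph{coordinate-by-coordinate} with Lagrange remainders (explicitly flagging that the multivariate mean-value theorem is false, which forces the row-wise matrix $A_n$ with rows evaluated at possibly different intermediate points $\bar\theta^{(i)}$), shows $A_n \to \nabla^2_\theta F_\varepsilon(\theta_0)$ a.s.\ by consistency, and concludes with the CLT for $\bar Y_n$ and Slutsky's lemma. You instead invert the mirror map: from $y^*_\varepsilon(\hat\theta_n)=\bar Y_n$ you write $\hat\theta_n = g^{-1}(\bar Y_n)$ and apply the delta method to $g^{-1}$ at $m=y^*_\varepsilon(\theta_0)$, with $J\,g^{-1}(m) = \big(\nabla^2_\theta F_\varepsilon(\theta_0)\big)^{-1}$ from the inverse function theorem. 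Your route is arguably cleaner: it sidesteps entirely the coordinate-wise Lagrange-remainder bookkeeping that the paper must handle, and packages the linearization into one standard theorem. The paper's route, conversely, makes the asymptotic linearization explicit and needs only differentiability of the forward map along segments, not an inverse function theorem. Both proofs share the same skeleton otherwise: stationarity, interiority of $y^*_\varepsilon(\theta_0)$ (so $\bar Y_n$ is eventually interior --- the paper gets this a.s., you get probability tending to one, which suffices), consistency via continuity of the inverse map, and the CLT for the bounded variables $Y_i$.

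One local caveat: your justification of the invertibility of $\nabla^2_\theta F_\varepsilon(\theta_0)$ is misdirected. Strong convexity of $\Omega$ dualizes to \emph{smoothness} of $F_\varepsilon$, i.e., the upper bound $\nabla^2_\theta F_\varepsilon \preceq (R_\cC M_\mu/\varepsilon)\,\mathrm{Id}$, not to a lower bound; and the identity $\nabla^2_\theta F_\varepsilon(\theta_0) = \big(\nabla^2(\varepsilon\Omega)(y^*_\varepsilon(\theta_0))\big)^{-1}$ already presupposes the nondegeneracy you are trying to establish (strict convexity plus twice differentiability alone do not force a positive definite Hessian at every point, as $x \mapsto x^4$ shows). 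To be fair, the paper's proof makes the same implicit assumption when it writes $A_n^{-1}$, and the statement itself displays $\big(\nabla^2_\theta F_\varepsilon(\theta_0)\big)^{-1}$, so this is a gloss shared with the original rather than a gap specific to your argument --- but since the inverse function theorem is the load-bearing step of your route, you depend on it more visibly and should either assume nondegeneracy explicitly or supply a separate argument for it.
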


\section{Experiments}
\label{SEC:expe}
We demonstrate the usefulness of perturbed maximizers in a supervised learning setting, as described in Section~\ref{SEC:FY}. 
We focus on a classification task and on two structured prediction tasks, label ranking and learning to predict shortest paths. 
Since we focus on the prediction task, the issues raised in Remark~\ref{REM:empty} do not apply. 
When learning with the Fenchel-Young losses, we simulate doubly stochastic gradients $\nabla_w L_\varepsilon(g_w(x_i) \, ; y_i)$ of the empirical loss with $M$ artificial perturbations (see Equation~\ref{EQ:doublystoch}).

We will open-source a Python package allowing to turn any black-box solver into a differentiable function, in just a few lines of code. Full details of the experiments are included in Appendix~\ref{APP:expe}.

\subsection{Perturbed max}
\label{SEC:expemax}
We use the perturbed argmax with Gaussian noise in an image classification task on the CIFAR-10 dataset. This serves two purposes: showing that we perform as well as the cross entropy loss, in a case where a soft max can be easily computed, and exhibiting the impact of the algorithmic parameters. We train a vanilla-CNN 
with 10 network outputs that are the entries of $\theta$, we minimize the Fenchel-Young loss between $\theta_i = g_w(x_i)$ and $y_i$, with different temperatures $\varepsilon$ and number of perturbations $M$. We observe competitive performance compared to standard losses as baselines (Fig.~\ref{FIG:classif}, left and center).

\begin{figure*}[ht!]
\begin{center}
\includegraphics[width=\textwidth]{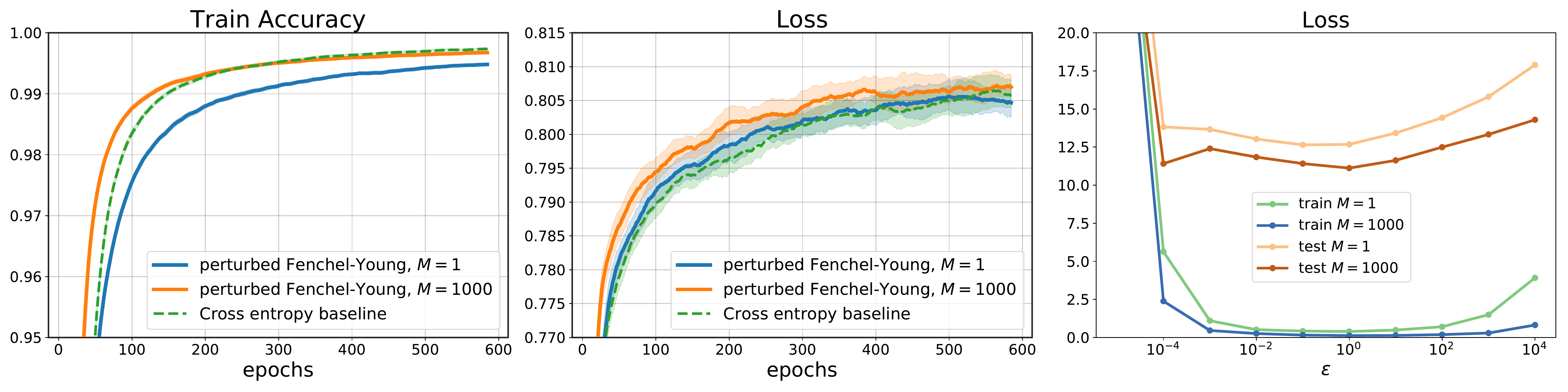}
\end{center}
\caption{{\bf Left.} Accuracy in training, using the perturbed FY loss, or cross entropy baseline. {\bf Center.} Test accuracy for these methods. {\bf Right.} Impact of the parameter $\varepsilon$ on test and train squared loss.} 
\label{FIG:classif}
\end{figure*}
We analyze the impact of the algorithmic parameters on optimization and generalization abilities. We exhibit the final loss and accuracy for different number of perturbations in the doubly stochastic gradient ($M=1,1000$). We highlight the importance of the temperature parameter $\varepsilon$ on the algorithm (see Figure~\ref{FIG:classif}, right). Very high or low temperatures degrade the ability to fit to training and to generalize to test data, by lack of smoothing or loss of information about $\theta$. We also observe that our framework is very robust to the choice of $\varepsilon$, demonstrating its adaptivity.


\begin{figure}[H]
\begin{center}
\includegraphics[width = 0.7\textwidth]{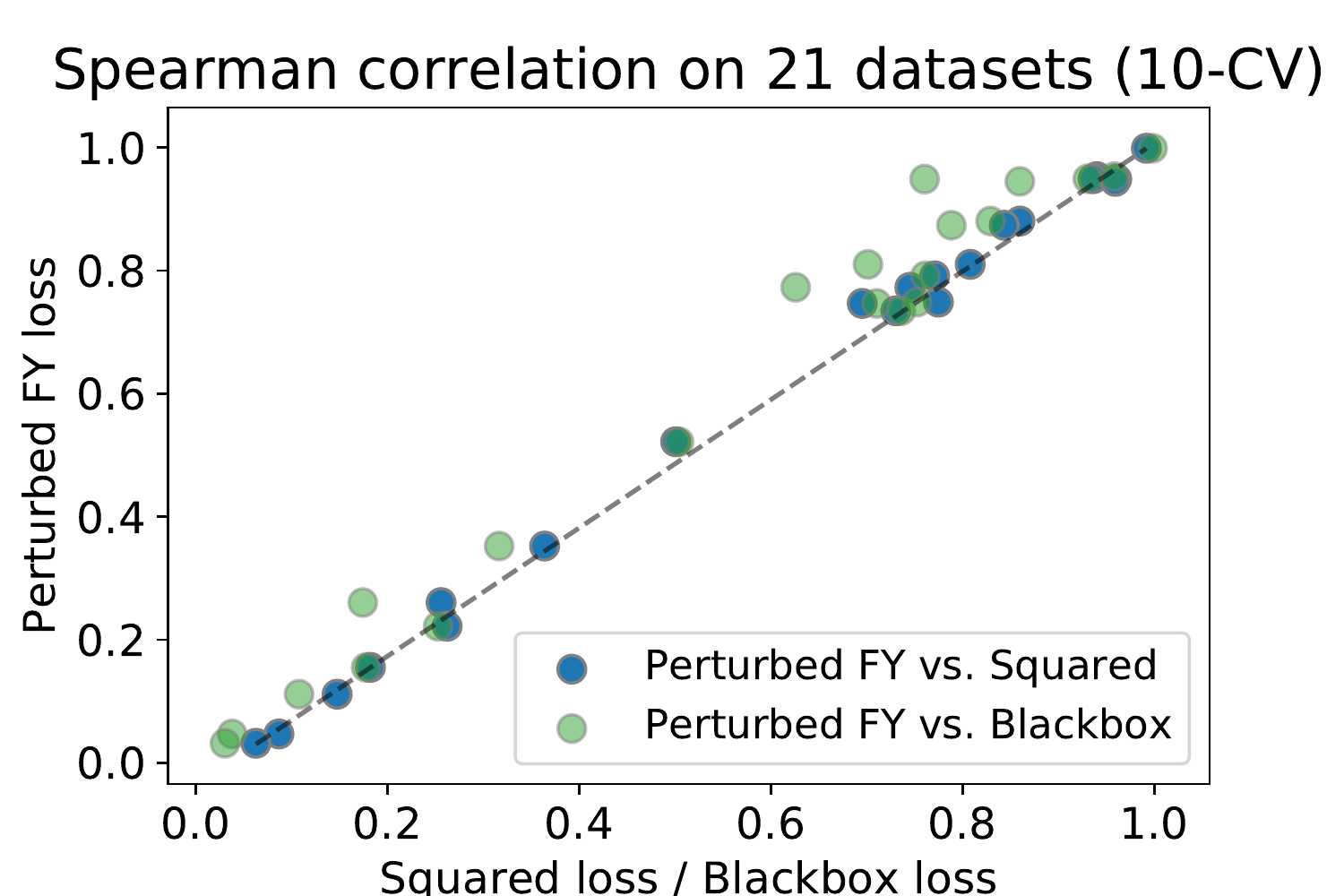}
\end{center}
\caption{Comparison on 21 datasets, for our proposed perturbed Fenchel-Young loss, a squared loss and the blackbox loss of \cite{vlastelica2019differentiation}. Points above the diagonal are datasets where our loss performs better.\label{FIG:ranking}
}
\end{figure}
\subsection{Perturbed label ranking}

We consider label ranking tasks, where each $y_i$ is a label permutation for features $x_i$. We minimize the weights of an affine model $g_w$ (i.e., $\theta_i = g_w(x_i)$) using our perturbed Fenchel-Young loss, a simple squared loss and the recently-proposed blackbox loss of \citet{vlastelica2019differentiation}. 
Note that our loss is convex in $\theta$ and enjoys unbiased gradients, while \citep{vlastelica2019differentiation} uses a non-convex loss with gradient proxies.
We use the same 21 datasets as in \citep{hullermeier2008label, cheng2009decision}.
We report Spearman's correlation (higher is better) in Figure~\ref{FIG:ranking}. Results are averaged over 10-fold CV and parameters tuned by 5-fold CV. We find that  our loss performs better or similarly (within a $5$\% range) on $76$ \% and $90$ \% of the datasets, respectively. 
Detailed experimental setup and  results are given in Appendix \ref{APP:ranking}.


To better understand the complexity of this task, we also created a range of artificial datasets where 100 labels are generated by $y_i = \argmax_y \langle x_i^\top w_0 + \sigma Z_i,y \rangle$, in dimension 50, for different values of $\sigma$. We minimize the same losses as before in $w$. For almost correct labels ($\sigma \approx 0$), our method accurately generalizes to the test data (see Figure~\ref{FIG:ranking}, and Figure~\ref{fig:partial-artificial} in Appendix~\ref{APP:expe} for other metrics). We observe that the Fenchel-Young loss performs as well or better than the other losses, particularly

\begin{figure}[H]
\begin{center}
\includegraphics[width = 0.8\textwidth]{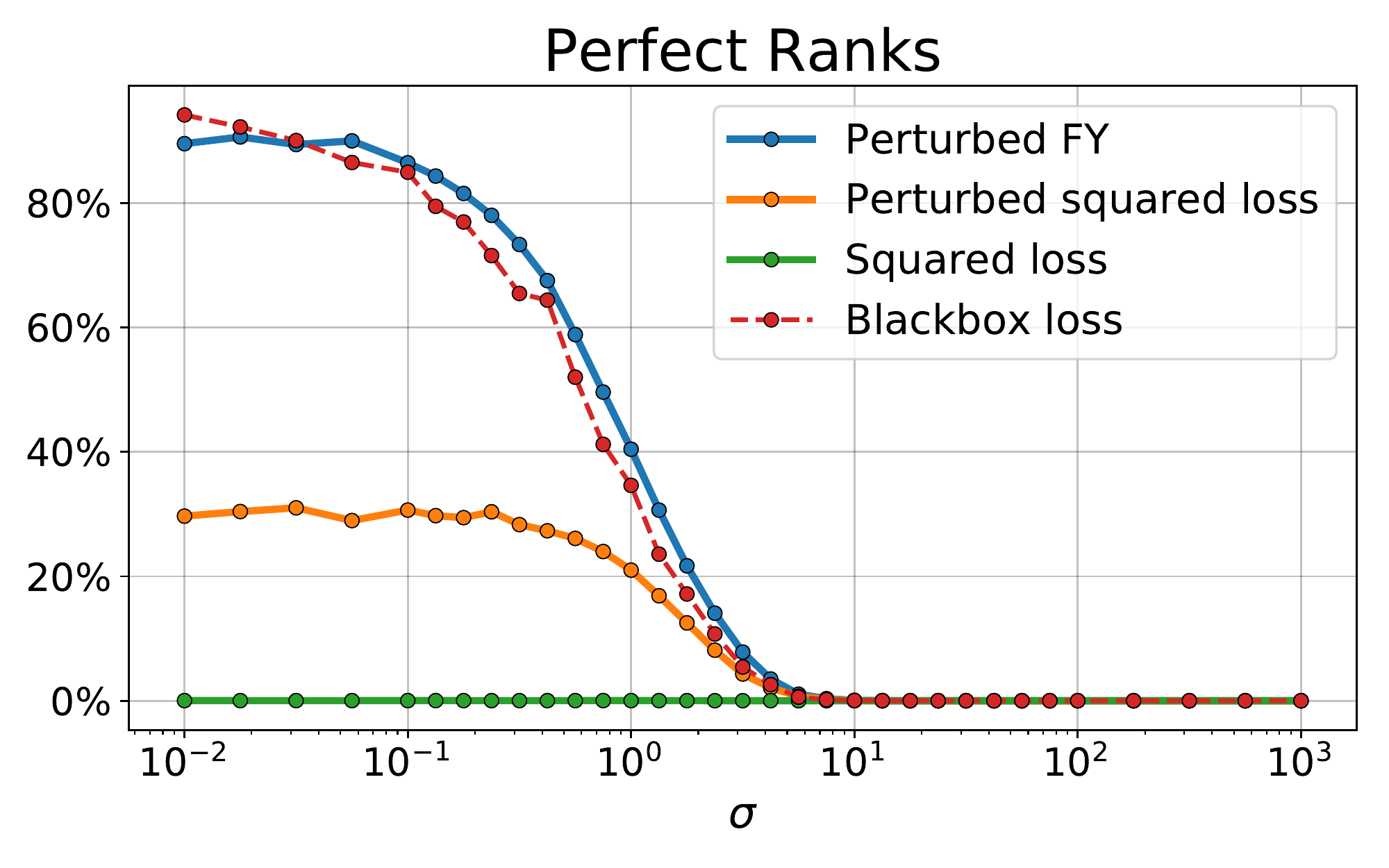}
\end{center}
\caption{Average number of instances with exactly correct ranks for all 100 labels, for different values of $\sigma$, for four methods.\label{FIG:artificial}
}
\end{figure}
 in terms of robustness to the noise. All details are included in Appendix~\ref{APP:ranking}.



\subsection{Perturbed shortest path}

We replicate the experiment of \citet{vlastelica2019differentiation}, aiming to learn the travel costs in graphs based on features, given examples of shortest path solutions (see Figure~\ref{FIG:warcraft}). We use a dataset of 10,000 RGB images of size $96 \times 96$ illustrating Warcraft terrains of $12\times 12$ 2D grid networks. The responses $y_i$ are a shortest path between the top-left and bottom-right corners, for costs hidden to the network, corresponding to the terrain type. They are $12\times12$ binary matrices representing the \mbox{vertices along the shortest path}.
\begin{figure}[H]
\begin{center}
\includegraphics[width=\textwidth]{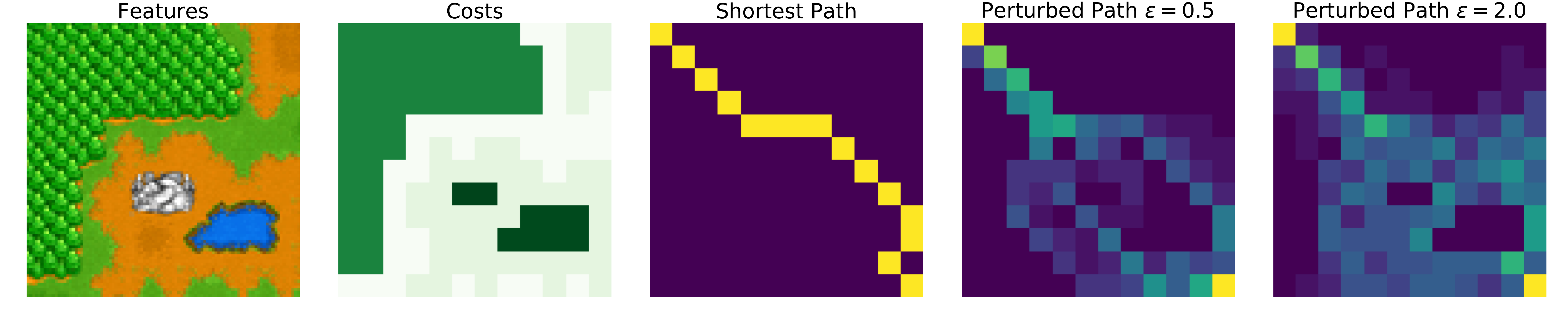}
\end{center}


\caption{In the shortest path experiment, training features are images. Shortest paths are computed based on terrain costs, hidden to the network. Training responses are shortest paths based on this cost. \label{FIG:warcraft}}
\end{figure}

Following \cite{vlastelica2019differentiation}, we train a network whose first five layers are those of ResNet18 for the Fenchel-Young loss between the predicted costs $\theta_i = g_w(x_i)$ and the shortest path $y_i$. We optimize over $50$ epochs with batches of size $70$, temperature $\varepsilon=1$ and $M=1$ (single perturbation). We are able, only after a few epochs, to generalize very well, and to accurately predict the shortest path on the test data. We compare our method to two baselines, from \citep{vlastelica2019differentiation}: training the same network with their proposed blackbox loss and with a squared loss. We show two metrics: perfect accuracy percentage and cost ratio to optimal path (see Figure~\ref{FIG:shortestpathperf}); full implementation details are in  Appendix~\ref{APP:short}.
\begin{figure*}[ht!]
\begin{center}
\includegraphics[width=\textwidth]{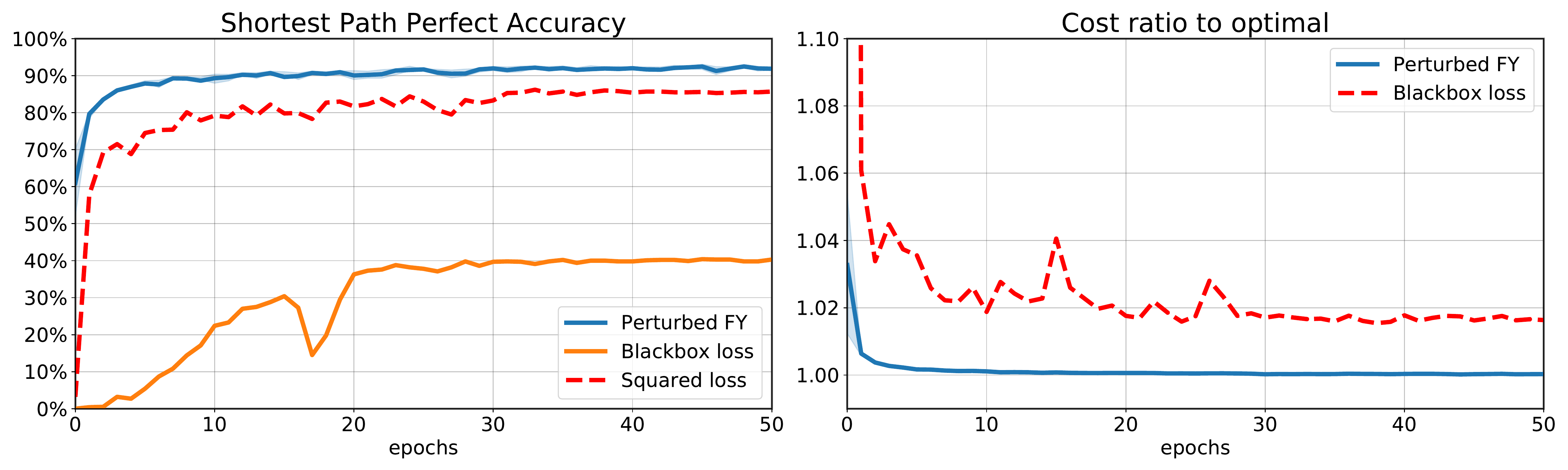}

\vspace{-0.1cm}
\caption{Accuracy of the predicted path for several methods during training. {\bf Left.} Percentage of test instances where the predicted path is optimal. {\bf Right.} Ratio of costs between the predicted path and the actual shortest path -- without the squared loss baseline as it does not yield valid paths. \label{FIG:shortestpathperf}}
\end{center}
\end{figure*}
\section{Conclusion}

Despite a large body of work on perturbations techniques for machine learning,
most existing works focused on approximating sampling, log-partitions and expectations under
the Gibbs distribution. Together with novel theoretical insights, we propose to use a general perturbation framework to differentiate through, not only a max, but also an argmax,
without ad-hoc modification of the underlying solver. In addition, by defining an equivalent
regularizer $\Omega$, we show how to construct Fenchel-Young losses and propose a doubly 
stochastic scheme, enabling learning in various tasks, and validate on experiments its ease of application.


\paragraph{Aknowledgements.} FB's work was funded in part by the French government under management of Agence Nationale de la Recherche as part of the ``Investissements d'avenir'' program, reference ANR-19-P3IA-0001 (PRAIRIE 3IA Institute). FB also acknowledges  support from the European Research Council (grant SEQUOIA 724063).


\bibliography{references}
\bibliographystyle{apalike}

\clearpage

\appendix

\section{Proofs of technical results}
\label{SEC:appProofs}

\begin{proof}[Proof of Proposition~\ref{PRO:duality}]
The function $\varepsilon \Omega$ is the Fenchel dual of $F_\varepsilon$ (see Proposition~\ref{PRO:fundPROP}, impact of the temperature), and is defined on $\cC$. As such, as in \cite{AbeLeeSin14}, we have that
\[
F_\varepsilon(\theta) = \sup_{y \in \cC}\big\{\langle \theta,y \rangle - \varepsilon \Omega(y) \big\}\, .
\]
It is maximized at $\nabla_\theta F_\varepsilon(\theta) = y_\varepsilon^*(\theta)$, by Fenchel-Rockaffelar duality \citep[see, e.g.][Appendix A]{wainwright2008graphical}.
\end{proof}

\begin{proof}[Proof of Proposition~\ref{PRO:fundPROP}]

The proof of these properties makes use of the notion of the {\em normal fan} of $\cC$. It is the set of all {\em normal cones} to all faces of the polytope $\cC$ \citep{rockafellar2009variational}. For each face, such a cone is the set of vectors in $\R^d$ such that the linear program on $\cC$ with this vector as cost is maximized on this face. They form a partition of $\R^d$, and these cones are full dimensional if and only if they are associated to a vertex  of $\cC$. These vertices are a subset $\cE$ of $\cY$, corresponding to extreme points of $\cC$.

As a consequence of this normal cone structure, since $\mu$ has a positive density, it assigns positive mass to sets if and only if they have non-empty interior, so for any $\theta \in \R^d$, and any $\varepsilon>0$, $p_\theta(y) > 0$ if and only if $y \in \cE$. In most applications, $\cE=\cY$ to begin with (all $y$ are potential maximizer for some vector of costs, otherwise they are not included in the set), and all points in $\cY$ have positive mass.

{\bf  Properties of $F_\varepsilon$}

- $F_\varepsilon$ is strictly convex

The function $F$ is convex, as a maximum of convex (linear) functions. By definition of $F_\varepsilon$, for every $\lambda \in [0,1]$ and $\theta, \theta' \in \R^d$, for $\theta_\lambda = \lambda \theta + (1-\lambda) \theta'$ we have
\[
\lambda F_\varepsilon(\theta) + (1-\lambda) F_\varepsilon(\theta') = \E[\lambda F(\theta+\varepsilon Z) + (1-\lambda) F(\theta'+ \varepsilon Z)] \le \E[ F(\lambda \theta + (1-\lambda) \theta'+\varepsilon Z)]
 = F_\varepsilon(\theta_\lambda)\, .
\]
The inequality holds with equality if and only if it holds within the expectation for almost all $z$ since the distribution of $Z$ is positive on $\R^d$. If the function $F_\varepsilon$ is not strictly convex, there exists therefore $\theta$ and $\theta'$ such that 
\[
\lambda F(\theta+\varepsilon z) + (1-\lambda) F(\theta'+ \varepsilon z) = F(\lambda \theta + (1-\lambda) \theta'+\varepsilon z)
\]
for all $\lambda \in [0,1]$, for almost all $z \in \R^d$. In this case, $F$ is linear on the segment $[\theta + \varepsilon z, \theta'+\varepsilon z]$ for almost all $z \in \R^d$.

If $\theta - \theta'$ is contained in the boundary between the normal cones to $y_1$ and $y_2$, for all distinct $y_1,y_2 \in \cE$, we have $\langle y_1 - y_2 , \theta-\theta'\rangle = 0$ for all such pairs of $y$, so $\theta$ is orthogonal to the span of all the pairwise differences of $y$. However, since $\cC$ has no empty interior, it is not contained in a strict affine subspace of $\R^d$ so $\theta - \theta' =0$. As a consequence, for distinct $\theta$ and $\theta'$, there exists $z \in \R^d$ such that $\theta + \varepsilon z$ and $\theta + \varepsilon z$ are in the interior of two normal cones to different $y\in \cE$. As a consequence, the same holds under perturbations of $z$ in a small enough ball of $\R^d$, so $F$ cannot be linear on almost all segments $[\theta + \varepsilon z, \theta'+\varepsilon z]$, and $F_\varepsilon$ is strictly convex.

- $F_\varepsilon$ is twice differentiable, as a direct consequence of Proposition~\ref{PRO:ipp}.

- $F_\varepsilon$ is $R_\cC$-Lipschitz

$F$ is the maximum of finitely many functions that are $R_\cC$-Lipschitz. It therefore also satisfies this property. $F_\varepsilon$ is an expectation of such functions, therefore it satisfies the same property.

- $F_\varepsilon$ is $R_\cC M_\mu / \varepsilon$-gradient Lipschitz.

We have, by Proposition~\ref{PRO:ipp}, for $\theta$ and $\theta'$ in $\R^d$
\[
\nabla_\theta F_\varepsilon(\theta) - \nabla_\theta F_\varepsilon(\theta') = \E[(F(\theta+\varepsilon Z) -  F(\theta'+\varepsilon Z))\nabla_z \nu(Z)/\varepsilon]\, .
\]
As a consequence, by the Cauchy--Schwarz inequality, and Lipschitz property of $F$, it holds that
\begin{align*}
\| \nabla_\theta F_\varepsilon(\theta) - \nabla_\theta F_\varepsilon(\theta') \| &\le \E[\|F(\theta+\varepsilon Z) -  F(\theta'+\varepsilon Z)\|^2]^{1/2} \E[\|\nabla_z \nu(Z)\|^2 /\varepsilon^2]^{1/2}\\
&\le R_\cC\|\theta - \theta' \| \E[\|\nabla_z \nu(Z)\|^2 ]^{1/2}/\varepsilon = (R_\cC M_\mu /\varepsilon)\|\theta - \theta' \| \, .
\end{align*}

{\bf Properties of $\Omega$}

The function $\varepsilon \Omega$ is the Fenchel dual of $F_\varepsilon$, which is strictly convex and $R_\cC M_\mu/\varepsilon$ smooth. As a consequence, $\Omega$ is differentiable on the image of $y^*_\varepsilon$ -- the interior of $\cC$ -- and it is $1/R_\cC M_\mu$-strongly convex.

- Legendre type property

The regularization function $\Omega$ is differentiable on the interior. If there is a point $y$ of its boundary such that $\nabla_y \Omega$ does not diverge when approaching $y$, then taking $\theta$ such that $\theta - \varepsilon \nabla_y \Omega(y) \in \cN_\cC(y)$ (where $\cN_\cC(y)$ is the normal cone to $\cC$ at $y$), then $y_\varepsilon^*(\theta) = y$. However, $y^*_\varepsilon$ takes image in the interior of $\cC$ (see immediately below), leading to a contradiction.

{\bf Properties of $y^*_\varepsilon$}

- The perturbed maximizer is in the interior of $\cC$

Since the distribution of $Z$ has positive density, the probability that $\theta + \varepsilon Z \in \cN_\cC(y)$ (i.e. $p_\theta(y)$) is positive for all $y \in \cE$. As a consequence, since
\[
y^*_\varepsilon(\theta) = \sum_{y \in \cE} y \, p_\theta(y)\, ,
\]
with all positive weights $p_\theta(y)$, $y^*_\varepsilon$ is in the interior of the convex hull $\cC$ of $\cE$.

- The function $y^*_\varepsilon$ is differentiable, by twice differentiability of $F_\varepsilon$, by Proposition~\ref{PRO:ipp}.

{\bf Influence of temperature parameter $\varepsilon >0$}

We have for all $\theta$
\[
F_\varepsilon(\theta) = \E[\max_{y \in \cC} \langle y ,\theta + \varepsilon Z \rangle] = \varepsilon \E[\max_{y \in \cC} \langle y/\varepsilon ,\theta + Z \rangle] = \varepsilon F_1(\theta/\varepsilon)\, .
\]
As a consequence
\[
(F_\varepsilon)^*(y) = \max_{z \in \R^d} \big\{\langle y,z\rangle - F_\varepsilon(z) \big\} = \varepsilon \max_{z \in \R^d} \big\{\langle y,z/\varepsilon \rangle - \varepsilon F_1(z/\varepsilon) \big\} = \varepsilon (F_1)^*(y) = \varepsilon \, \Omega(y)\, .
\]
Since $y^*_\varepsilon(\theta) = \nabla_\theta F_\varepsilon(\theta)$, and since $F_\varepsilon(\theta) = \varepsilon F_1(\theta/\varepsilon)$, we have $y^*_\varepsilon(\theta) = y_1^*(\theta/\varepsilon)$.
\end{proof}

\begin{proof}[Proof of Proposition~\ref{PRO:temperature}]
We recall that we assume that $\theta$ yields a unique maximum to the linear program on $\cC$. This is true almost everywhere, and assumed here for simplicity of the results. We discuss briefly at the end of this proof how this can be painlessly extended to the more general case.

{\bf Limit at low temperatures $(\varepsilon \to 0)$}

Since $F$ is convex (see proof of Proposition~\ref{PRO:fundPROP}), so by Jensen's inequality
\begin{align*}
F\big(\E[\theta+\varepsilon Z] \big) &\le \E\big[F(\theta+\varepsilon Z) \big]\\
F(\theta) &\le F_\varepsilon(\theta)\, .
\end{align*}
Further, we have for all $Z \in \R^d$
\[
\max_{y \in \cC}\langle \theta + \varepsilon Z ,y \rangle \le \max_{y \in \cC}\langle \theta ,y \rangle +\varepsilon \max_{y' \in \cC}\langle  Z ,y' \rangle
\]
Taking expectations on both sides yields that
\[
F_\varepsilon(\theta) \le F(\theta) + \varepsilon F_1(\theta)\, .
\]
As a consequence, when $\varepsilon \to 0$, combining these two inequalities yields that $F_\varepsilon(\theta) \to F(\theta)$.

Regarding the behavior of the perturbed maximizer $y^*_\varepsilon(\theta)$, we follow the arguments of \citep[][Proposition 4.1]{peyre2019computational}. By Proposition~\ref{PRO:duality} and the definition of $y^*(\theta)$, we have
\[
0 \le \langle y^*(\theta), \theta \rangle - \langle y^*_\varepsilon(\theta),\theta\rangle \le  \varepsilon\big[\Omega\big(y^*(\theta)\big) - \Omega\big(y^*_\varepsilon(\theta)\big)\big]
\]
Since $\Omega$ is continuous, it is bounded on $\cC$, and the right hand term above is bounded by $C \varepsilon$, for some $\varepsilon>0$. As a consequence, when $\varepsilon \to 0$, $\langle y^*_\varepsilon(\theta),\theta \rangle \to \langle y^*(\theta),\theta \rangle$. For any sequence $\varepsilon_n \to 0$, the sequence $y_n = y^*_{\varepsilon_n}(\theta)$ is in a compact $\cC$. Therefore, it has a subsequence $y_{\varphi(n)}$ that converges to some limit $y_\infty \in \cC$. However, since $\langle y^*_{\varphi(n)},\theta \rangle \to \langle y^*(\theta),\theta \rangle$, we have $\langle y_\infty,\theta \rangle = \langle y^*(\theta),\theta \rangle$, by continuity. Since $y^*(\theta)$ is a unique maximizer, $y_\infty = y^*(\theta)$. As a consequence, all convergent subsequences of $y_n$ converge to the same limit $y^*(\theta)$: it is the unique accumulation point of this sequence. It follows directly that $y_n$ converges to $y^*(\theta)$, as it lives in a compact set, which yields the desired result.

{\bf Limit at high temperatures} By Proposition~\ref{PRO:fundPROP}, $y^*_\varepsilon(\theta) = y_1(\theta/\varepsilon)$, so the desired result follows by continuity of the perturbed maximizer.

{\bf Nonasymptotic inequalities.} These inequalities follow directly from those proved to establish limits at low temperatures.

If $\theta$ is such that the maximizer is not unique (which occurs only on a set of measure 0), the only result affected is the convergence of $y^*_\varepsilon(\theta)$ when $\theta \to 0$. Following the same proof of \citep[][Proposition 4.1]{peyre2019computational}, it can be shown to converge to the minimizer of $\Omega$ over the set of maximizer. This point is always unique, as the minimizer of a strongly convex function over a convex set.
\end{proof}

\begin{proof}[Proof of Proposition~\ref{PRO:asymptNorm}]
We follow the classical proofs in M-estimation \citep[see, e.g.][Section 5.3]{van2000asymptotic}. First, the estimator is consistent as a virtue of the continuous mirror map between $\R^d$ and $\text{int}(\cC)$. For $n$ large enough $\bar Y_n \in \text{int}(\cC)$, since the probability of each extreme point of $\cC$ is positive. By definition of the estimator and stationarity condition for $\hat \theta_n$, we have in these conditions
\[
\nabla_\theta F_\varepsilon(\hat \theta_n) = \bar Y_n, \; \nabla_\theta F_\varepsilon(\theta_0) = y^*_\varepsilon(\theta_0)\, .
\]
By the law of large numbers, $\bar Y_n$ converges to its expectation $y^*_\varepsilon(\theta_0)$ a.s. Since $\varepsilon \nabla_y \Omega$, the inverse of $\nabla_\theta F_\varepsilon$, is also continuous (by the fact that $\Omega$ is convex smooth), we have that $\hat \theta_n$ converges to $\theta_0$ a.s.

We write the first order conditions for $\bar L_{\varepsilon,n}$ at $\hat \theta_n$ and the Taylor expansion with Lagrange remainder for all coordinates, one by one
\begin{equation}
\label{EQ:taylorNormal}
0 = \nabla_\theta \bar L_{\varepsilon,n}(\hat \theta_n) = \nabla_\theta \bar L_{\varepsilon,n}(\theta_0) + A_n (\hat \theta_n - \theta_0)\,,
\end{equation}
where $A$ is such that, for all coordinates $i \in [d]$
\[
A_i = (\nabla^2_\theta \bar L_{\varepsilon, n}(\bar \theta^{(i)}))_i\,
\]
for some $\bar \theta^{(i)} \in [\hat \theta_n, \theta_0]$. We note here that since the estimator is not necessarily in dimension 1, $A_n$ cannot be written directly as $\nabla^2_\theta \bar L_{\varepsilon, n}(\bar \theta)$ for some $\bar \theta \in [\hat \theta_n,\theta_0]$, since the Taylor expansion with Lagrange remainder is not true in its multivariate form. However, doing it coordinate-by-coordinate as here allows to circumvent this issue.

We have that $\nabla^2 \bar L_{\varepsilon,n} = \nabla^2 F_\varepsilon$. Since $\hat \theta_n \to \theta_0$ a.s.~ we have that $\bar \theta^{(i)} \to \theta_0$ for all $i \in [d]$, so $A_n \to \nabla^2 F_\varepsilon(\theta_0)$ a.s. Rearranging terms in Eq.~(\ref{EQ:taylorNormal}), we have 
\begin{align*}
\sqrt{n}(\hat \theta_n - \theta_0) &= -A_n^{-1} \cdot \sqrt{n}\nabla_\theta \bar L_{\varepsilon,n} (\theta_0)\\
&= A_n^{-1} \cdot \sqrt{n} \big(\bar Y_n - y^*_\varepsilon(\theta_0)\big)
\end{align*}

By the central limit theorem, $\sqrt{n}\big(\bar Y_n-y^*_\varepsilon(\theta_0)\big) \to \mathcal{N}(0,\Sigma_Y)$ in distribution. As a consequence, by convergence of $A_n$ and Slutsky's lemma, we have the convergence in distribution
\[
\sqrt{n}(\hat \theta_n - \theta_0) \to \mathcal{N}\big(0,\big(\nabla_\theta^2 F_\varepsilon(\theta_0)\big)^{-1}\Sigma_Y \big(\nabla_\theta^2 F_\varepsilon(\theta_0)\big)^{-1}\big)\, .
\]

\end{proof}

\newpage
\section{Examples of discrete decision problems as linear programs}
\label{APP:examples}
Our method applies seamlessly to all decision problems over discrete sets. Indeed, any problem of the form $\max_{y \in \cY} s(y)$, for some score function $s:\cY \to \R$, can at least be written in the form
\[
\max_{x \in \Delta^{|\cY|}} \langle x , s \rangle\, ,
\]
by representing $\cY$ as the vertices of the unit simplex in $\R^{|\cY|}$. However, for most interesting decision problem that can actually be solved in practice, the score function takes a simpler form $s(y) = \langle y , \theta \rangle$, for some representation of $y \in \R^d$ and some $\theta$. We give here a non-exhaustive list of examples of interesting problems of this type.

{\bf Maximum.} The max function from $\R^d$ to $\R$, that returns the largest among the $d$ entries of a vector $\theta$ is ubiquitous in machine learning, the hallmark of any classification task. It is equal to $F(\theta)$ over the standard unit simplex.
\[
F(\theta) = \max_{i \in [d]} \theta_i \, ,\;\cC = \{y \in \R^d\, : \, y \ge 0\, , \;\mathbf{1}^\top y = 1\}\, .
\]
On this set, using Gumbel noise yields the log-sum-exp for $F_\varepsilon$, the Gibbs distribution for $p_\theta$, and the softmax for $y^*_\varepsilon$. Using other noise distributions for $Z$ will change the model.

\noindent
{\bf Top $k$.} The function from $\R^d$ to $\R$ that returns the sum of the $k$ largest entries of a vector $\theta$ is also commonly used. It fits our framework over the set
\[
\cC = \{y \in \R^d\, : \, 0 \le y \le 1\, , \;\mathbf{1}^\top y = k\}\, .
\]
\noindent
{\bf Ranking.} The function returning the ranks (in descending order) of a vector $\theta \in \R^d$ can be written as the argmax of a linear program over the {\em permutahedron}, the convex hull of permutations of any vector $v$ with distinct entries
\[
\cC = \cP_v = \text{cvx}\{P_\sigma v \, : \, \sigma \in \Sigma_d\}\, .
\]
Using different reference vectors $v$ yield different perturbed operations, and $v=(1,2,\ldots,d)$ is commonly used.

\noindent
{\bf Shortest paths.}
For a graph $G=(V,E)$ and positive costs over edges $c \in \R^E$, the problem of finding a shortest path (i.e. with minimal total cost) from vertices $s$ to $t$ can be written in our setting with $\theta = -c$ and
\[
\cC = \{y \in \R^E : y \ge 0\,, (\mathbf{1}_{\to i} - \mathbf{1}_{i \to})^\top y = \delta_{i=s} - \delta_{i=t} \}\, .
\]

\noindent
{\bf Assignment.} The linear assignment problem, and more generally the optimal transport problem, can also be written as a linear program. In the case of the assignment problem, it is the {\em Birkhoff polytope} of doubly-stochastic matrices, whose extreme points are the permutation matrices
\[
\cC = \{Y \in \R^{d' \times d'}\, : \, Y_{ij} \ge 0, \; \mathbf{1}^\top Y = \mathbf{1}^\top,\quad Y \mathbf{1} = \mathbf{1}\}\,.
\]
There is a large literature on regularization of this problem, with entropic penalty \cite{cuturi2013sinkhorn}. This is one of the rare cases where the regularized version of the problem is actually computationally lighter, in stark contrast with the general case in our setting.

\noindent{\bf Combinatorial problems.} Many other problems, such in combinatorial optimization can be formulated exactly (e.g. minimum spanning tree, maximum flow), or approximately via convex relaxations (e.g. traveling salesman problem, knapsack), via relaxations in linear programs. Differentiable versions of these exact or approximate solutions can therefore be obtained via perturbation methods.

\noindent{\bf Relaxations with atomic norms} A wide variety of high-dimensional statistical learning problems can be tackled by regularization via atomic, or otherwise sparsity-inducing norms \cite{chandrasekaran2012convex,bach2012optimization}. Our framework also allows us to consider versions of these estimators that are differentiable in their inputs.

\newpage
\section{Experimental details}
\label{APP:expe}

\subsection{Perturbed maximum}
\label{APP:max}
In the experiment on perturbed maximum for classification on CIFAR-10, we train a vanilla-CNN made of 4 convolutional and 2 fully connected layers  for 600 epochs with batches of size 32. 

We train by minimizing two losses in the weights $w$ of the network function $g_w$, fitting the outputs $\theta_i = g_w(x_i)$ to labels $y_i$
\begin{itemize}[leftmargin=10pt]
\renewcommand \labelitemi{--}
    \item Perturbed Fenchel-Young (proposed): our proposed Fenchel-Young loss (see Definition \ref{definition:FY_loss}),
    \[
    L_\varepsilon(g_w(x_i); y_i)\, ,
    \]
    \item Cross entropy loss, for a soft max layer $s_\varepsilon$ and an entrywise $\log$
    \[
    H(g_w(x_i); y_i) = \langle y_i, \log(s_\varepsilon(g_w(x_i))) \rangle\,.
    \]
\end{itemize}

\subsection{Perturbed label ranking}
\label{APP:ranking}

In this experiment, we consider label ranking tasks, where each $y_i$ is a ground-truth label permutation for features $x_i$. We minimize the weights of an affine model $g_w$ (i.e., $\theta_i = g_w(x_i)$) using the following losses:
\begin{itemize}[leftmargin=10pt]
\renewcommand \labelitemi{--}
    \item Perturbed Fenchel-Young (proposed): our proposed Fenchel-Young loss (see Definition \ref{definition:FY_loss}),
    \item Perturbed $+$ Squared loss (proposed): $\frac{1}{2} \|y_i - y_\varepsilon^*(g_w(x_i))\|^2$, where gradients can be computed using Proposition \ref{PRO:ipp} and the chain rule,
    \item Squared loss: $\frac{1}{2} \|y_i - g_w(x_i)\|^2$,
    \item Blackbox loss: $\frac{1}{2} \|y_i - y^*(g_w(x_i))\|^2$, 
    where we use the gradient \textbf{proxy} of \citet{vlastelica2019differentiation}, re-implemented for the experiments on ranking.
\end{itemize}
We use the same 21 datasets as in \citep{hullermeier2008label, cheng2009decision}.
Detailed results are given in Table \ref{tab:ranking_results}
and Table \ref{tab:ranking_results_tuned}.

For the experiment on artificial datasets, we use the same setup as above, with a linear model instead of affine. The ground-truth vector $w_0$ is obtained by uniform sampling in $\{-1,1\}^d$, and the $x_i$ are standard isotropic normal. In the experiments presented here, we optimize over 2000 epochs, with a batch size of 32: very good results are obtained even with a smaller number of epochs, but we increased it artificially to better evaluate numerically the {\em final} predictive performance of all methods (see Figure~\ref{fig:app-epochs}). In the main text, we present in Figure~\ref{FIG:artificial} the metric of perfect rank accuracy over one run of simulations. We present in Figure~\ref{fig:partial-artificial} the same metric, as well as the metric of partial rank accuracy (i.e. the proportion of correctly ordered labels), for completeness, averaged over three runs of the dataset. To further illustrate these results, we include in Figure~\ref{fig:app-epochs}, for two fixed values of the noise level, how these metrics evolve through training.
\subsection{Perturbed shortest path}
\label{APP:short}
In this experiment, we have followed the setup of \citet{vlastelica2019differentiation}, to obtain comparable results. We have replicated the network that they use based on Resnet18, and followed their optimization procedure, using Adam with the same learning rate schedule, changing at epochs 30 and 40 out of 50. We also included the baseline that they used, based on training the same network without an optimizer layer. These results are obtained by using the implementation code that they provide. 

We minimize the weights of this model for our proposed Fenchel-Young loss (see Definition \ref{definition:FY_loss}).

The perfect accuracy metric measures the percentage of test instances for which an exactly optimal path is recovered, and the cost ratio to optimal metric measures the ratio between the total cost of the path proposed by taking the shortest path for proposed costs $\theta_i = g_w(x_i)$ (after training) to the total cost of the path with true costs (see Figure~\ref{FIG:shortestpathperf}).
\begin{table}[p]
\caption{Spearman correlation on $21$ datasets averaged by $10$-fold
    cross-validation. The learning rate is chosen from $(10^{-3}, 10^{-2},
    10^{-1})$ by grid search over $5$-fold cross-validation.}
\begin{small}
\begin{center}
\begin{tabular}{lcccc}
\toprule
Dataset & Perturbed FY & Perturbed $+$ Squared loss & Squared loss & Blackbox
loss \\
\midrule
authorship & 0.95 $\pm$ 0.01 & 0.28 $\pm$ 0.17 & {\bf 0.96} $\pm$ 0.01 & 0.76 $\pm$ 0.04 \\
bodyfat & 0.35 $\pm$ 0.07 & 0.23 $\pm$ 0.09 & {\bf 0.36} $\pm$ 0.08 & 0.32 $\pm$ 0.07 \\
calhousing & {\bf 0.26} $\pm$ 0.02 & 0.13 $\pm$ 0.07 & {\bf 0.26} $\pm$ 0.01 & 0.17 $\pm$ 0.06 \\
cold & 0.05 $\pm$ 0.04 & 0.00 $\pm$ 0.04 & {\bf 0.09} $\pm$ 0.04 & 0.04 $\pm$ 0.04 \\
cpu-small & {\bf 0.52} $\pm$ 0.01 & 0.44 $\pm$ 0.05 & 0.50 $\pm$ 0.01 & 0.50 $\pm$ 0.01 \\
diau & 0.22 $\pm$ 0.03 & 0.12 $\pm$ 0.05 & {\bf 0.26} $\pm$ 0.03 & 0.25 $\pm$ 0.03 \\
dtt & 0.11 $\pm$ 0.04 & 0.03 $\pm$ 0.06 & {\bf 0.15} $\pm$ 0.04 & 0.11 $\pm$ 0.04 \\
elevators & {\bf 0.79} $\pm$ 0.01 & 0.67 $\pm$ 0.05 & 0.77 $\pm$ 0.01 & 0.76 $\pm$ 0.02 \\
fried & {\bf 1.00} $\pm$ 0.00 & 0.82 $\pm$ 0.10 & 0.99 $\pm$ 0.00 & {\bf 1.00} $\pm$ 0.00 \\
glass & {\bf 0.88} $\pm$ 0.05 & 0.81 $\pm$ 0.09 & 0.86 $\pm$ 0.05 & 0.83 $\pm$ 0.05 \\
heat & 0.03 $\pm$ 0.03 & 0.01 $\pm$ 0.03 & {\bf 0.06} $\pm$ 0.02 & 0.03 $\pm$ 0.03 \\
housing & {\bf 0.75} $\pm$ 0.03 & 0.65 $\pm$ 0.07 & 0.70 $\pm$ 0.03 & 0.71 $\pm$ 0.03 \\
iris & {\bf 0.81} $\pm$ 0.09 & 0.78 $\pm$ 0.21 & {\bf 0.81} $\pm$ 0.08 & 0.70 $\pm$ 0.08 \\
pendigits & 0.95 $\pm$ 0.00 & 0.82 $\pm$ 0.06 & 0.94 $\pm$ 0.00 & {\bf 0.96} $\pm$ 0.00 \\
segment & {\bf 0.95} $\pm$ 0.01 & 0.78 $\pm$ 0.06 & 0.94 $\pm$ 0.00 & 0.93 $\pm$ 0.00 \\
spo & 0.16 $\pm$ 0.02 & 0.07 $\pm$ 0.03 & {\bf 0.18} $\pm$ 0.02 & {\bf 0.18} $\pm$ 0.02 \\
stock & {\bf 0.77} $\pm$ 0.05 & 0.56 $\pm$ 0.23 & 0.75 $\pm$ 0.03 & 0.63 $\pm$ 0.07 \\
vehicle & {\bf 0.87} $\pm$ 0.03 & 0.69 $\pm$ 0.09 & 0.84 $\pm$ 0.03 & 0.79 $\pm$ 0.03 \\
vowel & 0.73 $\pm$ 0.03 & 0.70 $\pm$ 0.03 & 0.73 $\pm$ 0.02 & {\bf 0.74} $\pm$ 0.02 \\
wine & 0.94 $\pm$ 0.03 & 0.85 $\pm$ 0.17 & {\bf 0.96} $\pm$ 0.03 & 0.86 $\pm$ 0.08 \\
wisconsin & 0.75 $\pm$ 0.03 & 0.56 $\pm$ 0.07 & {\bf 0.78} $\pm$ 0.03 & 0.75 $\pm$ 0.04 \\
\bottomrule
\end{tabular}
\end{center}
\end{small}
\label{tab:ranking_results}
\end{table}

\begin{table}[p]
\caption{Spearman correlation on $21$ datasets averaged by $10$-fold
    cross-validation. The learning rate \textbf{and} the temperature
    $\varepsilon$ are chosen from $(10^{-3}, 10^{-2},
    10^{-1})$ by grid search over $5$-fold cross-validation.}
\begin{small}
\begin{center}
\begin{tabular}{lcccc}
\toprule
Dataset & Perturbed FY & Perturbed $+$ Squared loss & Squared loss & Blackbox
loss \\
\midrule
authorship & 0.95 $\pm$ 0.01 & 0.93 $\pm$ 0.02 & {\bf 0.96} $\pm$ 0.01 & 0.76 $\pm$ 0.04 \\
bodyfat & 0.35 $\pm$ 0.07 & 0.34 $\pm$ 0.08 & {\bf 0.36} $\pm$ 0.08 & 0.32 $\pm$ 0.07 \\
calhousing & {\bf 0.26} $\pm$ 0.02 & 0.25 $\pm$ 0.04 & {\bf 0.26} $\pm$ 0.02 & 0.17 $\pm$ 0.06 \\
cold & 0.08 $\pm$ 0.04 & 0.08 $\pm$ 0.04 & {\bf 0.09} $\pm$ 0.04 & 0.04 $\pm$ 0.04 \\
cpu-small & 0.53 $\pm$ 0.01 & {\bf 0.54} $\pm$ 0.02 & 0.50 $\pm$ 0.02 & 0.50 $\pm$ 0.01 \\
diau & {\bf 0.26} $\pm$ 0.03 & {\bf 0.26} $\pm$ 0.02 & {\bf 0.26} $\pm$ 0.03 & 0.25 $\pm$ 0.03 \\
dtt & 0.14 $\pm$ 0.04 & 0.13 $\pm$ 0.04 & {\bf 0.15} $\pm$ 0.04 & 0.11 $\pm$ 0.04 \\
elevators & {\bf 0.80} $\pm$ 0.01 & 0.79 $\pm$ 0.01 & 0.77 $\pm$ 0.01 & 0.76 $\pm$ 0.02 \\
fried & {\bf 1.00} $\pm$ 0.00 & 0.99 $\pm$ 0.01 & 0.99 $\pm$ 0.00 & {\bf 1.00} $\pm$ 0.00 \\
glass & {\bf 0.88} $\pm$ 0.05 & 0.84 $\pm$ 0.06 & 0.86 $\pm$ 0.06 & 0.83 $\pm$ 0.05 \\
heat & {\bf 0.06} $\pm$ 0.03 & 0.05 $\pm$ 0.03 & {\bf 0.06} $\pm$ 0.02 & 0.03 $\pm$ 0.03 \\
housing & {\bf 0.76} $\pm$ 0.03 & 0.75 $\pm$ 0.03 & 0.70 $\pm$ 0.04 & 0.71 $\pm$ 0.03 \\
iris & 0.80 $\pm$ 0.12 & {\bf 0.86} $\pm$ 0.11 & 0.81 $\pm$ 0.08 & 0.70 $\pm$ 0.08 \\
pendigits & {\bf 0.96} $\pm$ 0.00 & 0.95 $\pm$ 0.00 & 0.94 $\pm$ 0.00 & {\bf 0.96} $\pm$ 0.00 \\
segment & {\bf 0.95} $\pm$ 0.00 & 0.94 $\pm$ 0.01 & 0.94 $\pm$ 0.01 & 0.93 $\pm$ 0.00 \\
spo & {\bf 0.18} $\pm$ 0.02 & {\bf 0.18} $\pm$ 0.02 & {\bf 0.18} $\pm$ 0.02 & {\bf 0.18} $\pm$ 0.02 \\
stock & {\bf 0.78} $\pm$ 0.07 & 0.70 $\pm$ 0.21 & 0.75 $\pm$ 0.03 & 0.63 $\pm$ 0.07 \\
vehicle & {\bf 0.89} $\pm$ 0.02 & 0.86 $\pm$ 0.03 & 0.84 $\pm$ 0.03 & 0.79 $\pm$ 0.03 \\
vowel & 0.74 $\pm$ 0.02 & {\bf 0.75} $\pm$ 0.03 & 0.73 $\pm$ 0.02 & 0.74 $\pm$ 0.02 \\
wine & 0.95 $\pm$ 0.04 & 0.91 $\pm$ 0.07 & {\bf 0.96} $\pm$ 0.04 & 0.86 $\pm$ 0.08 \\
wisconsin & {\bf 0.78} $\pm$ 0.03 & 0.77 $\pm$ 0.03 & 0.77 $\pm$ 0.03 & 0.75 $\pm$ 0.04 \\
\bottomrule
\end{tabular}
\end{center}
\end{small}
\label{tab:ranking_results_tuned}
\end{table}

\begin{figure*}[ht!]
\begin{center}
\includegraphics[width=\textwidth]{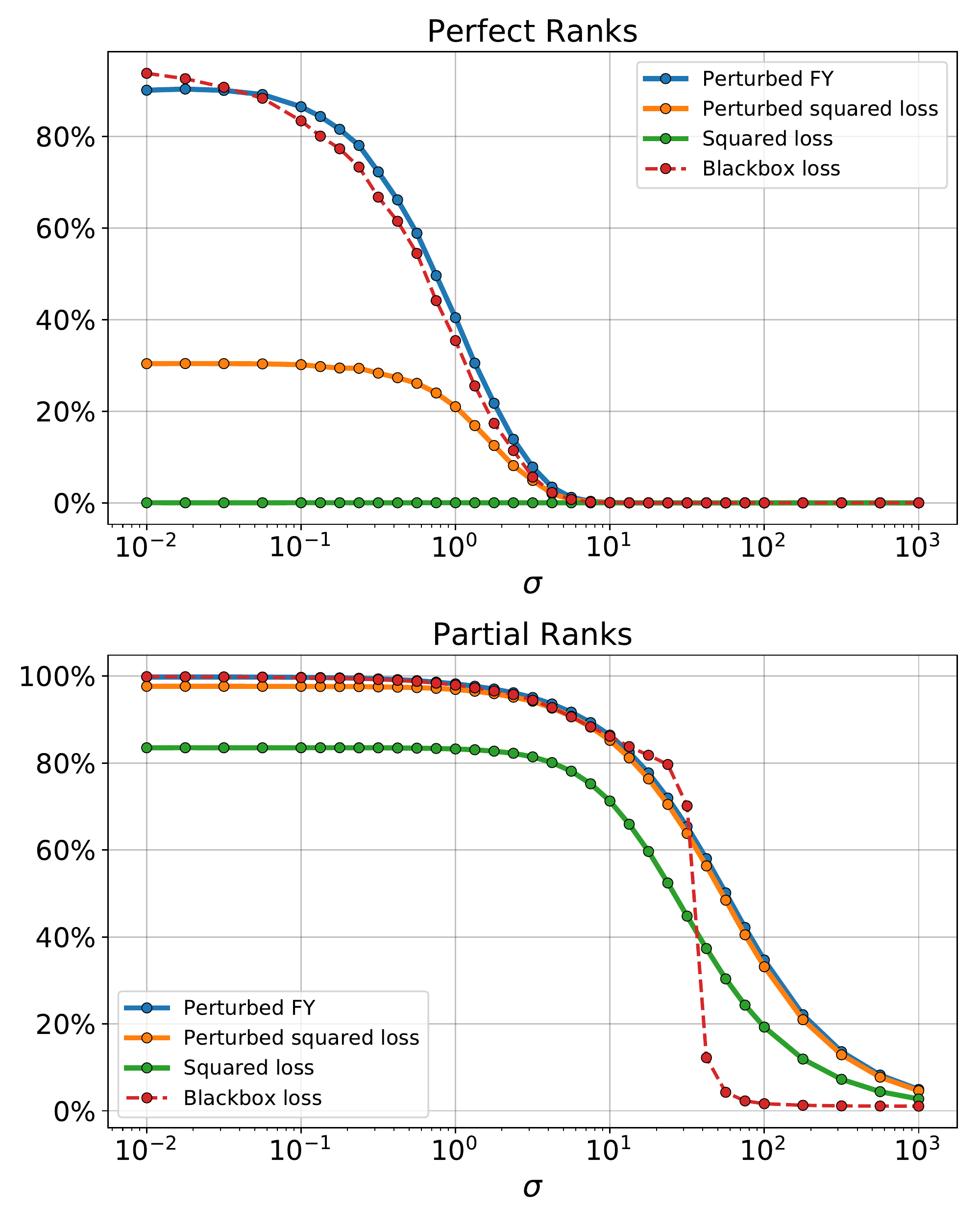}

\vspace{-0.1cm}
\caption{As in Figure~\ref{FIG:artificial}, we show here both ({\bf Top}) the average number of instances with exactly correct ranks (perfect ranks) for all 100 labels ({\bf Bottom}) the average number of correctly ranked labels (partial ranks). In both for different values of $\sigma \in [10^{-2}, 10^3]$, for four methods. \label{fig:partial-artificial}}
\end{center}
\end{figure*}

\begin{figure*}[ht!]
\begin{center}
\begin{tabular}{cc}
\includegraphics[width=0.5\textwidth]{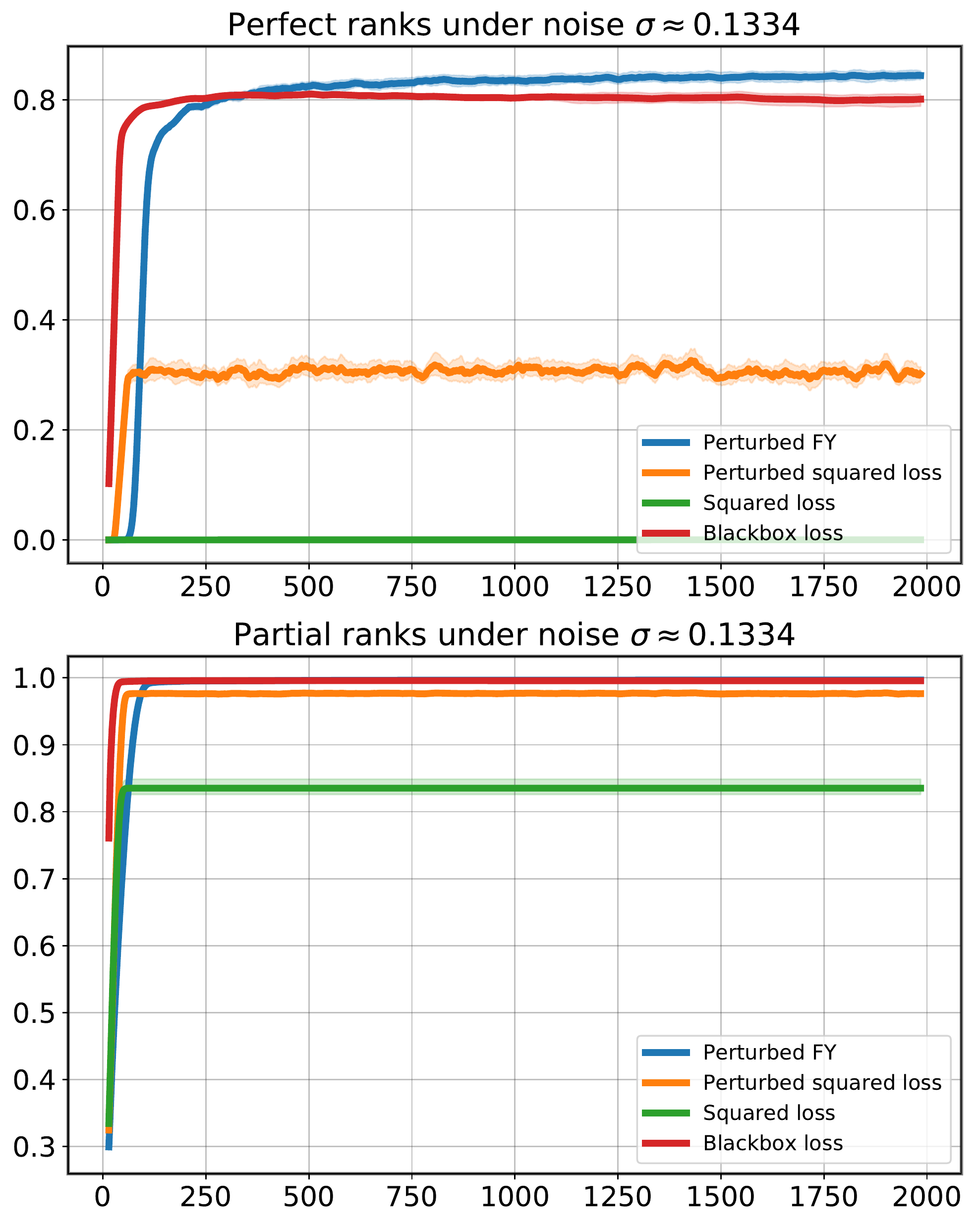}
&
\includegraphics[width=0.5\textwidth]{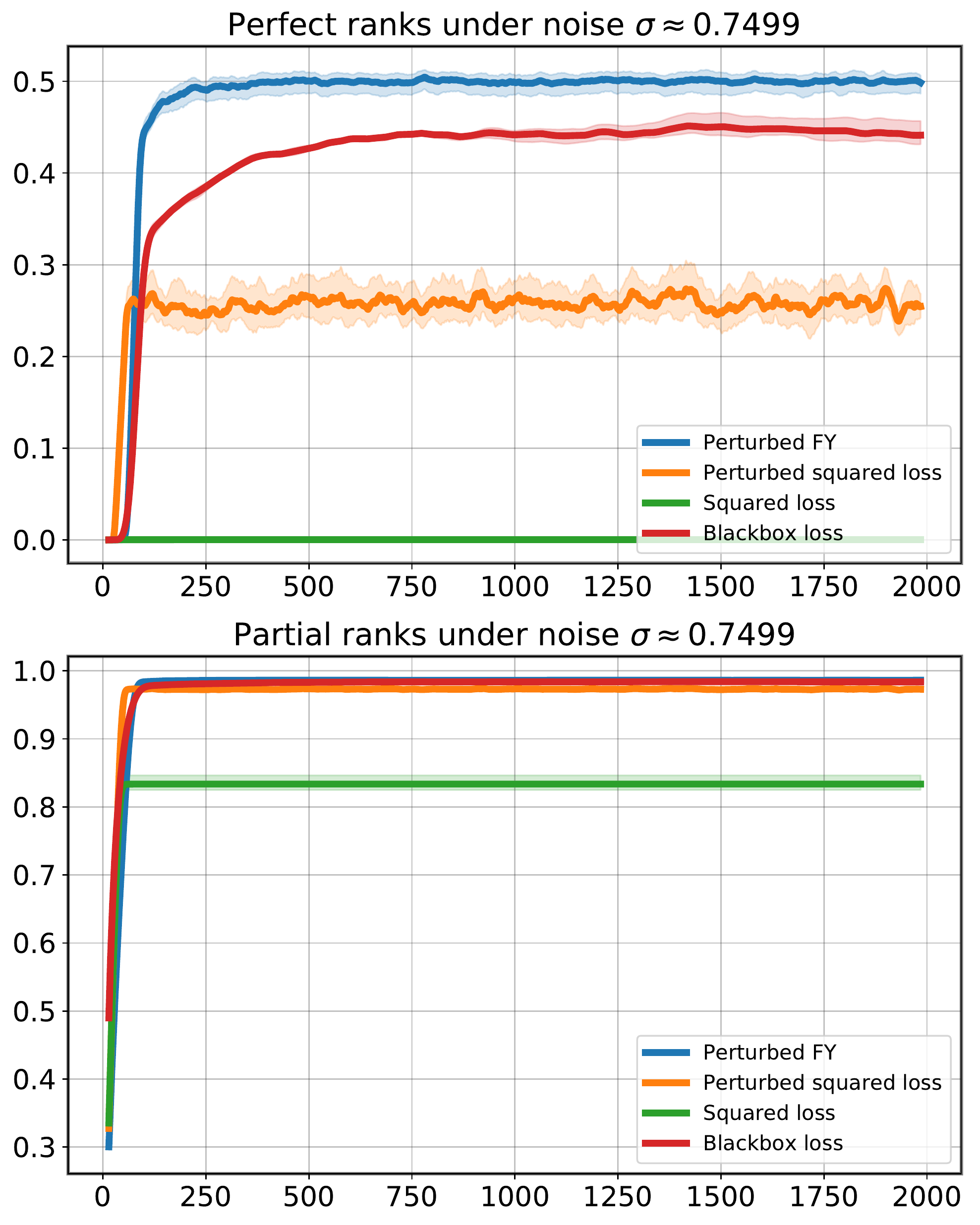}
\end{tabular}
\caption{We report the same metrics as in Figure~\ref{fig:partial-artificial} of perfect ranks and partial ranks at two fixed noise levels, as a function of the number of epochs ({\bf Left}) for $\sigma \approx 0.1334$ ({\bf Left}) for $\sigma \approx 0.7449$.\label{fig:app-epochs}}
\end{center}
\end{figure*}

\end{document}